\newtheorem{theorem}{Theorem}[section]
\newtheorem{lemma}[theorem]{Lemma}
\theoremstyle{definition}
\begin{document}

\title{Critical Sampling for Robust Evolution Operator Learning of Unknown Dynamical Systems}

\author{Ce Zhang$^{\orcidlink{0000-0001-6789-0130}}$, \IEEEmembership{Student Member, IEEE}, Kailiang Wu$^{\orcidlink{0000-0002-9042-3909}}$, and Zhihai He$^{\orcidlink{0000-0002-2647-8286}}$, \IEEEmembership{Fellow, IEEE}
\thanks{Corresponding author: Zhihai He.}
\thanks{C. Zhang is with the Machine Learning Department, Carnegie Mellon University, Pittsburgh, United States. Most of this work was done during his undergraduate studies at Department of Electronic and Electrical Engineering, Southern University of Science and Technology, Shenzhen, P. R. China (email: cezhang@cs.cmu.edu).}
\thanks{K. Wu is with the Department of Mathematics, Southern University of Science and Technology, Shenzhen, P. R. China (email: wukl@sustech.edu.cn).}
\thanks{Z. He is with the Department of Electronic and Electrical Engineering, Southern University of Science and Technology, Shenzhen, P. R. China, and also with Pengcheng Laboratory, Shenzhen, P. R. China (email: hezh@sustech.edu.cn).}
}

\markboth{IEEE Transactions on Artificial Intelligence, October 2023}
{Zhang \MakeLowercase{\textit{et al.}}: Critical Sampling for Robust Evolution Operator Learning of Unknown Dynamical Systems}

\maketitle
\begin{abstract}
Given an unknown dynamical system, what is the minimum number of samples needed for effective learning of its governing laws and accurate prediction of its future evolution behavior, and how to select these critical samples? In this work, we propose to explore this problem based on a design approach. Starting from a small initial set of samples, we adaptively discover critical samples to achieve increasingly accurate learning of the system evolution. One central challenge here is that we do not know the network modeling error since the ground-truth system state is unknown, which is however needed for critical sampling. To address this challenge, we introduce a multi-step reciprocal prediction network where forward and backward evolution networks are designed to learn the temporal evolution behavior in the forward and backward time directions, respectively. Very interestingly, we find that the desired network modeling error is highly correlated with the multi-step reciprocal prediction error, which can be directly computed from the current system state. This allows us to perform a dynamic selection of critical samples from regions with high network modeling errors for dynamical  systems. Additionally, a joint spatial-temporal evolution network is introduced which incorporates spatial dynamics modeling into the temporal evolution prediction for robust learning of the system evolution operator with few samples. Our extensive experimental results demonstrate that our proposed method is able to dramatically reduce the number of samples needed for effective learning and accurate prediction of evolution behaviors of unknown dynamical systems by up to hundreds of times.
\end{abstract}

\begin{IEEEImpStatement}
\looseness=-1
This work investigates an important problem in learning and prediction with deep neural networks: \textit{how to characterize and estimate the prediction errors during the inference stage?} This is a very challenging problem since the ground-truth values for unknown dynamical systems are not available. In recent years, learning-based methods for complex and dynamic system modeling have become an important area of research in artificial intelligence. We recognize that many existing data-driven approaches for learning the evolution operator typically assume the availability of sufficient data. In practice, this is not the case. In practical dynamical systems, such as ocean, cardiovascular and climate systems, sample collection is often costly or very limited due to resource constraints or experimental accessibility. To our best knowledge, this work is one of the first efforts to address this challenge. The success of our approach also contributes significantly to deep learning and signal estimation research. 
\end{IEEEImpStatement}

\begin{IEEEkeywords}
Critical Sampling, Evolution Operator, Dynamical Systems, Differential Equations.
\end{IEEEkeywords}

\section{Introduction}
\label{sec-intro}
\IEEEPARstart{R}{ecently}, learning-based methods for complex and dynamic system modeling have become an important area of research in machine learning \cite{raissi2018deep,hernandez2022thermodynamics,xing2022neural}.
The behaviors of dynamical systems in the physical world are governed by their underlying  physical laws \cite{bongard2007automated,schmidt2009distilling}. In many areas of science and engineering, ordinary differential equations (ODEs) and partial differential equations (PDEs) play important roles in describing and modeling these physical laws \cite{brunton2016discovering,raissi2018deep,long2018pde,chen2018neural,raissi2019physics,qin2019data}. 
In recent years, data-driven modeling of unknown physical systems from measurement data has emerged as an important area of research.
There are two major approaches that have been explored.
The first approach typically tries to identify all the potential terms in the unknown governing equations from a priori dictionary, which includes all possible terms that may appear in the equations \cite{brunton2016discovering,schaeffer2017sparse,rudy2017data,raissi2018deep,long2018pde,Wu_2019_JCP,Wu_2020_SISC,xu2021robust,nazari2022physics}. The second approach for data-driven learning of unknown dynamical systems is to approximate the evolution operator of the underlying equations, instead of identifying the terms in the equations \cite{qin2019data,Wu_2020_JCP,qin2021data,li2021physics}.

Many existing data-driven approaches for learning the evolution operator typically assume the availability of sufficient data, and often 
require a large set of measurement samples to train the neural network, especially for high-dimensional systems. For example, to effectively learn a neural network model for the 2D Damped Pendulum ODE system, existing methods typically need more than 10,000 samples to achieve sufficient accuracy \cite{qin2019data, Wu_2020_JCP}. This number increases dramatically with the dimensions of the system. For example, for the 3D Lorenz system, the number of needed samples used in the literature is often increased to one million.
We recognize that, in practical dynamical systems, such as ocean, cardiovascular and climate systems, it is very costly to collect observation samples. 
This leads to a new and important research question:  \textit{what is the minimum number of samples needed for robust learning of the governing laws of an unknown system and accurate prediction of its future evolution behavior?} 

\begin{figure*}[!t]
\centering
\centering \includegraphics[width=0.85\linewidth]{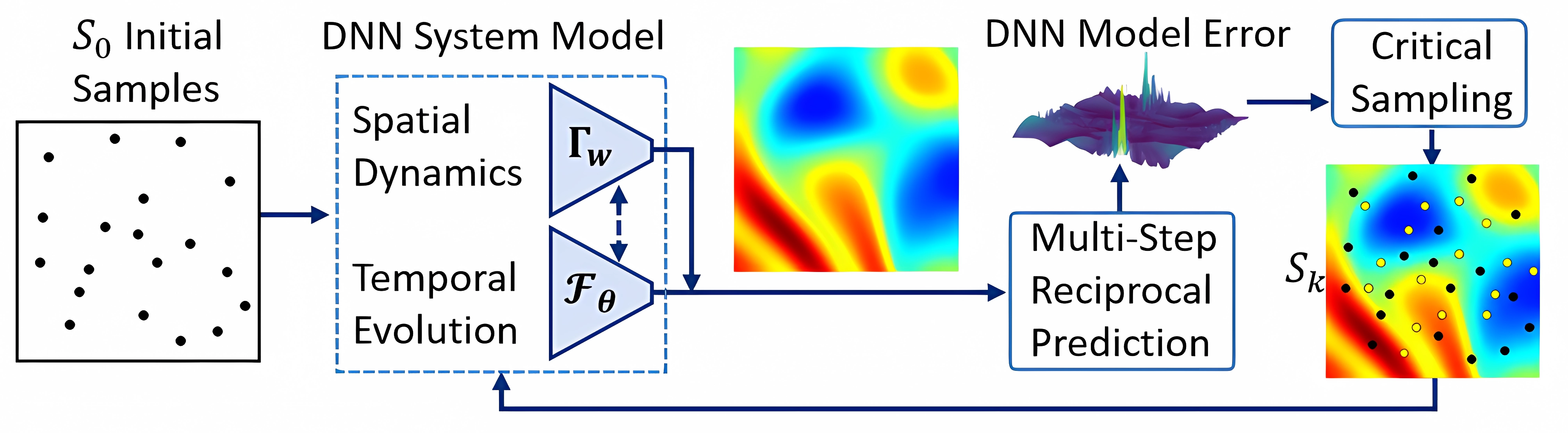}
\caption{Illustration of the proposed method of critical sampling for accurately learning the evolution behaviors of unknown dynamical systems.}
\label{fig:framework}
\vspace{-10pt}
\end{figure*}

Adaptive sample selection for network learning, system modeling and identification  has been studied in the areas of active learning and optimal experimental design \cite{gao2022failure,mao2023physics,wu2023comprehensive}.
Methods have been developed for global optimization of experimental sequences \cite{llamosi2014experimental}, active data sample generation for time-series learning and modeling \cite{zimmer2018safe}, 
Kriging-based sampling method for learning spatio-temporal dynamics of systems \cite{huang2022physics}, adaptive training of physics-informed deep neural networks \cite{zhang2022simulation}, and data-collection scheme for system identification \cite{mania2022active}. 
However, within the context of deep neural network modeling of unknown dynamical systems, the following key challenging issues have not been adequately addressed: (1) How to characterize and estimate the prediction error of the deep neural networks? (2) Based on this error modeling, how to adaptively select the critical samples and successfully train the deep neural networks from these few samples?

Figure \ref{fig:framework} illustrates the proposed method of critical sampling for accurately learning the evolution behaviors of unknown dynamical systems. 
We start with a small set of initial samples, then iteratively discover and collect critical samples to obtain more accurate network modeling of the system. During critical sampling, the basic rule is to select the samples from regions with high network modeling errors so that these selected critical samples can maximally reduce the overall modeling error. 
However, the major challenge here is that we do not know network modeling error,  i.e., the difference between the system state predicted by the network and the ground-truth which is not available for unknown systems.
To address this challenge, we establish  a multi-step reciprocal prediction framework where a forward evolution network and a backward evolution network are designed to learn and predict the temporal evolution behavior in the forward  and backward time directions, respectively. 
Our hypothesis is that, if the forward and backward prediction models are both accurate, starting from an original state $A$, if we perform the forward prediction for $K$ times and then perform the backward prediction for another $K$ times, the final prediction result $\bar{A}$ should match the original state $A$. The error between $\bar{A}$ and $A$ is referred to as the \textit{multi-step reciprocal prediction error}.

Very interestingly, we find that the network modeling error is highly correlated with the multi-step reciprocal prediction error. Note that multi-step reciprocal prediction error  can be directly computed from the current system state, without the need to know the ground-truth system state\footnote{"The ground-truth system state" refers to the true or actual state of the system at a specific time. It represents the accurate and precise system state that is obtained by highly accurate ODE/PDE solvers.}. 
This allows us to perform a dynamic selection of critical samples from regions with high network modeling errors and develop an adaptive learning method for dynamical systems.
To effectively learn the system evolution from this small set of critical samples, we introduce a joint spatial-temporal evolution network structure which couples spatial dynamics learning with temporal evolution learning. 
Our extensive experimental results demonstrate that our proposed method is able to dramatically reduce the number of samples needed for effective learning and accurate prediction of evolution behaviors of unknown dynamical systems. 
This paper has significant impacts in practice since collecting samples from real-world dynamical systems can be very costly or limited due to resource/labor constraints or experimental accessibility.

The \textbf{major contributions} of this work can be summarized as follows. 
(1) We have successfully developed a multi-step reciprocal prediction approach to characterize the prediction errors in deep neural network modeling of unknown dynamical systems. We have made an interesting finding that the network modeling error is highly correlated with the multi-step reciprocal prediction error, which enables us to develop the critical sampling method. 
(2) We have incorporated spatial dynamics modeling into the temporal evolution prediction for sample augmentation, which enables us to predict or interpolate more samples at unknown locations for stable evolution operator learning.
(3) Our proposed method is able to dramatically improve the long-term prediction accuracy, while reducing the number of needed samples and related sample collection costs for learning the system evolution, which is highly desirable in practical applications.

\vspace{-1pt}
\section{Related Work}
\vspace{-1pt}
\label{sec-related}
In this section, we review existing research closely related to our work. 

\textbf{(1) Data-driven modeling of unknown physical systems.} There are two major approaches that have been explored in the literature.
The first approach aims to learn the mathematical formulas or expressions of the underlying governing equations. 
In a series developments of this direction, the seminal work was made by 
\cite{bongard2007automated,schmidt2009distilling}, where 
 symbolic regression was proposed to learn nonlinear dynamic systems from data. 
Later, more approaches have been proposed in this direction, including but not limited to sparse regression  \cite{brunton2016discovering,schaeffer2017sparse,rudy2017data}, neural networks \cite{long2018pde,raissi2018deep,long2019pde,nazari2022physics}, 
 Koopman theory \cite{brunton2017chaos}, Gaussian process regression \cite{raissi2017machine}, model selection approach \cite{mangan2017model}, classical polynomial approximations \cite{Wu_2019_JCP,Wu_2020_SISC},  genetic algorithms \cite{xu2020dlga,xu2021robust,xu2021deep}, and linear multi-step methods \cite{keller2021discovery}, etc.

The second approach aims to approximate the evolution operator of the underlying dynamical system typically via a deep neural network, which predicts the system state for the next time instance from the current state \cite{Wu_2020_JCP,qin2021data,li2021fourier,li2021physics,li2022learning}. In fact, the idea of such an approach is essentially equivalent to learn the integral form of the underlying unknown differential equations \cite{qin2019data}.
The performance of this approach has been demonstrated for learning ODEs \cite{qin2019data} and modeling PDEs in 
generalized Fourier spaces \cite{Wu_2020_JCP} and physical space \cite{chen2022deep}. 
Recently, this approach has also been extended to data-driven modeling of parametric differential equations \cite{qin2021deep}, non-autonomous systems \cite{qin2021data},   
partially observed systems \cite{fu2020learning}, biological models \cite{su2021deep}, and model correction \cite{chen2021generalized}. 
For an autonomous dynamical system, its evolution operator completely characterizes the system evolution behavior. Researchers have demonstrated that the evolution operator, once successfully learned, can be called repeatedly to predict the  evolution behaviors of the unknown dynamical systems \cite{qin2019data,Wu_2020_JCP,chen2022deep}.

In this work, we choose the second approach of learning the evolution operator.
Compared to the first approach  which tries to recover the mathematical expression of the unknown governing equations, the second approach of learning evolution operators often has the following distinctive features: 
(1) It does not require a prior knowledge about the form of the unknown governing equations.
(2) Unlike the first approach, the evolution operator approach, which is based on the integral form of the underlying dynamical system, does not require numerical approximation of the temporal derivatives and allows large time steps during the learning and prediction processes.
(3) Although the first approach may successfully recover the expressions of the governing equations, during the prediction stage, it still needs to  construct suitable numerical schemes to further solve the learned equations. On the contrary, the second approach learns the evolution operator which can be directly used to perform long-term prediction of the system  behavior in the future.

\textbf{(2) State-space models and adjoint state methods.}
State-space models have shown to be a powerful tool for modeling the behaviors of dynamical systems \cite{mcgoff2015statistical}. Methods have been developed for approximating dynamical systems with hidden Markov model (HMM) \cite{rabiner1989tutorial,fraser2008hidden}, recurrent neural network (RNN) \cite{han2004modeling}, long short-term memory network (LSTM) \cite{vlachas2018data}, reservoir computing (RC) \cite{inubushi2017reservoir}, structured variational autoencoder (SVAE) \cite{johnson2016composing}, linear dynamical system (LDS) and its variations \cite{fox2008nonparametric,linderman2017bayesian,gao2016linear}.

We recognize that the multi-step forward and backward processes and the usage of mismatch errors are related to those in the recent adjoint state methods for neural ODE learning (\textit{e.g.}~\cite{chen2018neural, zhuang2020mali, zhuang2020adaptive}). However, our method is uniquely different in the following aspects. (1) In the adjoint state method, the backpropagation is used to compute gradients based on a Lagrangian functional. However, in our method, the backward network is used to learn the inverse of the forward evolution operator (namely, backward evolution operator, see Lemma \ref{lem:backisforward}). (2) The adjoint state method aims to compute the gradients of the loss functions more efficiently and accurately. However, our method aims to discover the critical samples for network learning.

\vspace{-1pt}
\section{Method}
\vspace{-1pt}
\label{sec-method}

In this section, we present our method of critical sampling for accurate learning of the evolution behaviors for unknown dynamical systems.

\subsection{Problem Formulation and Method Overview}
\label{sec:overview}

In this work, we focus on learning the evolution operator $\mathbf{\Phi}_\Delta: \mathbb{R}^n\rightarrow \mathbb{R}^n$ for autonomous dynamical systems, which maps the system state from time $t$ to its next state at time $t+\Delta$: $    {\bf u}(t+\Delta) = \mathbf{\Phi}_\Delta({\bf u}(t)).$
It should be noted that, for autonomous systems, this evolution operator
$\mathbf{\Phi}_\Delta$ remains invariant over time. It only depends on the time difference $\Delta$.
For an autonomous system, its evolution operator completely characterizes the system evolution behavior \cite{qin2019data,Wu_2020_JCP,chen2022deep}.

Our  goal is to develop a deep neural network method to accurately learn the evolution operator and robustly predict the long-term evolution of the system using a minimum number of selected critical samples. 
Specifically, to learn the system evolution over time,  the measurement samples for training the evolution network are collected in the form of pairs. Each pair represents two solution states along the evolution trajectory at time instances $t$ and $t+\Delta$. For simplicity, we assume that the start time is $t=0$.  
Using a high-accuracy system solver, we generate $J$ system state vectors $\{{\bf u}^j(0)\}_{j=1}^J$ at time 0  and $\{{\bf u}^j(\Delta)\}_{j=1}^J$ at time $\Delta$
in the computational domain $D$.
Thus, the training samples are given by 
\begin{equation}
    \mathbf{\mathcal{S}}_F = \{[{\bf u}^j(0)\rightarrow {\bf u}^j(\Delta)] : {\bf u}^j(0), {\bf u}^j(\Delta)\in \mathbb{R}^n, 1\le j\le J\}.
\end{equation}
 It is used to train the forward evolution network $\mathcal{F}_\theta$ which  approximates the forward evolution operator $\mathbf{\Phi}_\Delta$.
 As discussed in Section \ref{sec-intro}, we introduce the idea of backward evolution operator  $\mathbf{\Psi}_\Delta: \mathbb{R}^n\rightarrow\mathbb{R}^n$, $\mathbf{u}(t) = \mathbf{\Psi}_\Delta(\mathbf{u}(t+\Delta))$.
 The original training samples in $\mathbf{\mathcal{S}}_F$ can be switched in time to create the following sample set 
 \begin{equation}
    \mathbf{\mathcal{S}}_G = \{[{\bf u}^j(\Delta)\rightarrow {\bf u}^j(0)]: 1\le j\le J\},
\end{equation}
 which is used to train the backward evolution network $\mathcal{G}_\vartheta $ to approximate the backward evolution operator $\mathbf{\Psi}_\Delta$.
 The forward and backward evolution networks, 
 $\mathcal{F}_\theta$ and  $\mathcal{G}_\vartheta$, allow us to iteratively predict the system's evolution in both forward and backward directions. 

\begin{figure*}[ht]
\vskip -0.05in
\centering
\centering \includegraphics[width=0.99\linewidth]{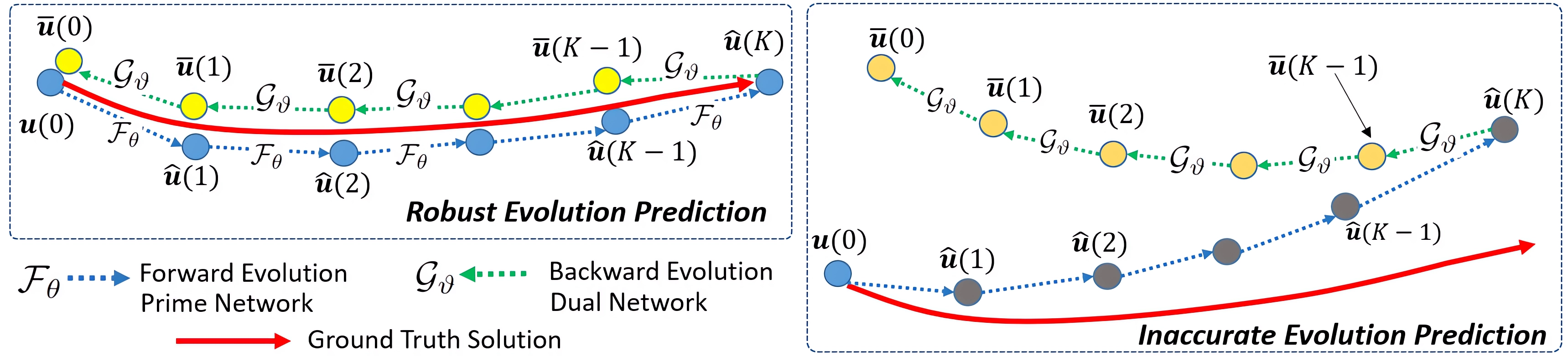}
\vskip -0.1in
\caption{Illustration of the proposed idea of multi-step reciprocal prediction error. The left figure shows the robust evolution prediction on locations with small multi-step reciprocal prediction errors, where the forward and backward paths align perfectly. The right figure shows the inaccurate evolution prediction on locations with large multi-step reciprocal prediction errors, where we need to select more critical samples.}
\label{fig:idea}
\vskip -0.15in
\end{figure*}

Based on the forward and backward evolution networks, we introduce the multi-step reciprocal prediction process. As illustrated in Figure \ref{fig:idea}, starting from the initial condition
$\mathbf{u}(0)\in \mathbb{R}^n$ at time $t=0$, we perform $K$-step prediction of the system state by repeatedly calling the forward evolution network $\mathbf{\cal F}_\theta$ with
$\hat{\mathbf{u}}((k+1)\Delta) = \mathbf{\cal F}_\theta[\hat{\mathbf{u}}(k\Delta)]$ and $\hat{\mathbf{u}}(0) = \mathbf{u}(0)$.
We then apply the backward evolution network $\mathcal{G}_\vartheta$ to perform $K$-step backward prediction: $\bar{\mathbf{u}}((k-1)\Delta) = \mathbf{\cal G}_\vartheta[\bar{\mathbf{u}}(k\Delta)]$
and get back to the initial condition { $\bar{\mathbf{u}}(0)$}. 
This  process of forward and backward evolution prediction is referred to as \textit{multi-step reciprocal prediction}.
The difference between the original value 
$\mathbf{u}(0)$ and the final prediction  {$\bar{\mathbf{u}}(0)$}, namely, 
{$
    {\mathbb{E}[\mathbf{u}(0)] = \| \mathbf{u}(0) - \bar{\mathbf{u}}(0)\|},
$} 
is referred to as the \textit{multi-step reciprocal prediction error} in the Euclidean norm $\| \|$.
In this work, we have the following interesting finding: there is a very strong correlation between the network modeling error ${\mathcal E}(\mathbf{u}(0))$ and the multi-step reciprocal prediction error $\mathbb{E}(\mathbf{u}(0))$.
This allows us to use  $\mathbb{E}(\mathbf{u}(0))$ 
to approximate the desired network modeling error 
$  {\mathcal E}[\mathbf{u}(0)] = \|\mathcal{F}_\theta[\mathbf{u}(0)] - \mathbf{\Phi}_\Delta(\mathbf{u}(0))\|.$
Note that this multi-step reciprocal prediction error can be computed directly with the current state and the forward-backward evolution networks. Its computation does not require the ground-truth system state. 
Therefore, we can use reciprocal prediction error  to guide the selection of critical samples from regions with large modeling errors. These regions correspond to locations where the trained evolution network is likely to have high network modeling errors, indicating that the network needs additional information or better training in those areas. By targeting these critical regions for sample selection, we can focus on improving the network's performance with a minimal number of selected samples.

Let $J_{m}$ be the number of samples at $m$-th iteration of our critical sampling process, current sample set $\mathbf{\mathcal{S}}_F^{m}= \{[{\bf u}^j(0)\rightarrow {\bf u}^j(\Delta)] :  1\le j\le J_{m}\}$ is used to train the spatial-temporal evolution network. The corresponding forward-backward evolution networks are denoted by $\mathbf{\cal F}_\theta^{m}$ and $\mathbf{\cal G}_\vartheta^{m}$. 
Using the corresponding multi-step reciprocal prediction error distribution  $\mathbb{E}(\mathbf{u}(0))$, we can determine regions with high error values and collect a new set of samples
 $\mathbf{\Omega}^{m}$, which are added to the existing set of samples to update the training set: 
\begin{equation}
        \mathbf{\mathcal{S}}_F^{m+1} = 
        \mathbf{\mathcal{S}}_F^{m} \bigcup 
        \mathbf{\Omega}^m =
        \{[{\bf u}^j(0)\rightarrow {\bf u}^j(\Delta)] :  1\le j\le 
         J_{m+1}\}.
\end{equation}
The above sampling-learning process is repeated until the overall prediction error drops below the target threshold. According to Table \ref{tab-time} in Section \ref{sec:performance}, the complexity for training is increased by 4-5 times, which directly depends on the number of iterations needed to reach the threshold for the network modeling error. However, the above critical sample selection process can dramatically reduce the number of needed samples and related sample collection costs.
In the following sections, we will explain this process in more detail.

\subsection{Multi-Step Reciprocal Prediction Error and Critical Sampling}
\label{sec:error}
 In this section, we  show that there is a strong correlation between the multi-step reciprocal prediction error and the network modeling error of the forward temporal evolution network $\mathbf{\mathcal{F}}_\theta^m$.

\textbf{(1) Multi-step reciprocal prediction.} In our multi-step reciprocal prediction scheme, we have a forward temporal evolution network $ \mathbf{\mathcal{F}}_\theta^m$ and a backward evolution network $\mathbf{\mathcal{G}}_\vartheta^m$, which model the system evolution behaviors in the forward and backward time directions.
If the forward and backward evolution networks $\mathbf{\mathcal{F}}_\theta^m$ and $\mathbf{\mathcal{G}}_\vartheta^m$ are both well-trained, accurately approximating the forward and backward evolution operators, for an arbitrarily given system state 
$\mathbf{u}(0)$, the 
one-step reciprocal prediction error
$  
    {\mathbb{E}[\mathbf{u}(0)] = \| \mathbf{u}(0) - \bar{\mathbf{u}}(0)\|} = \|\mathbf{u}(0) - 
    {\mathbf{\mathcal{G}}_\vartheta^m}[
    {\mathbf{\mathcal{F}}_\theta^m}[\mathbf{u}(0)]] \|
$
should approach zero.
Now, we extend this one-step reciprocal prediction to $K$ steps.
As illustrated in Figure \ref{fig:idea},  starting from the initial condition 
$\mathbf{u}(0)$, we repeatedly apply the forward evolution network $\mathcal{F}_\theta^m$ to perform $K$-step prediction of the system future states,
$ 
    \hat{{\bf u}}(k\Delta)= 
    {\mathcal{F}_\theta^{m, (k)}}\left[{\bf u}(0)\right],
$
where $k=1, \cdots, K-1, K$, $\mathcal{F}_\theta^{m, (k)}$ represents the $k$-fold composition of $\mathcal{F}_\theta^m$:
\vspace{-3pt}
\begin{equation}
   \mathcal{F}_\theta^{m, (k)} = \underbrace { \mathcal{F}_\theta^m \circ \mathcal{F}_\theta^m \circ \dots \circ \mathcal{F}_\theta^m}_{k-{\rm fold}}.
\end{equation}
After $K$ steps of forward evolution prediction, then, starting with $ \hat{\bf u}(K\Delta)$, we perform $K$ steps of backward evolution prediction using  network $\mathcal{G}_\vartheta^m$:  
$
    \bar{\bf u}(k\Delta)=
    {\mathcal{G}_\vartheta^{m, (K-k)}}[\hat{\bf u}(K\Delta)], k=K-1, \cdots, 1, 0,
$
where
\vspace{-3pt}
\begin{equation}
   \mathcal{G}_\theta^{m, (K-k)} = \underbrace { \mathcal{G}_\theta^m \circ \mathcal{G}_\theta^m \circ \dots \circ \mathcal{G}_\theta^m}_{(K-k)-{\rm fold}}
\end{equation}
and reach back to time $t=0$. 
If the forward and backward evolution networks are both accurate, the forward prediction path and the backward prediction path should match each other. 
Motivated by  this, we define the multi-step reciprocal prediction error for the forward evolution network $\mathcal{F}_\theta^m$ as the deviation between the forward and backward prediction paths:
\vspace{-5pt}
\begin{equation}
\label{eq-lsf}
    \mathbb{E}[\mathbf{u}(0)] = \sum_{k=0}^K \Big\| \hat{\bf u}(k\Delta)-\bar{\bf u}(k\Delta)\Big\|^2.
    \vspace{-3pt}
\end{equation}
Note that, when computing  $\mathbb{E}[\mathbf{u}(0)]$, we only need the current system state $\mathbf{u}(0)$, the forward and backward evolution networks $\mathcal{F}_\theta^m$ and $\mathcal{G}_\vartheta^m$. 
Figure \ref{fig:reciprocal} shows several examples from the Damped Pendulum and 2D Nonlinear ODE systems listed in Table \ref{tab-eqs}. The top row shows examples with accurate predictions of their system states. We can see that their 
forward and backward prediction paths match well and the corresponding multi-step prediction error is very small. For comparison, the bottom shows examples with large prediction errors. 

\begin{figure*}[htbp!]
\begin{center}
\centerline{\includegraphics[width=0.965\linewidth]{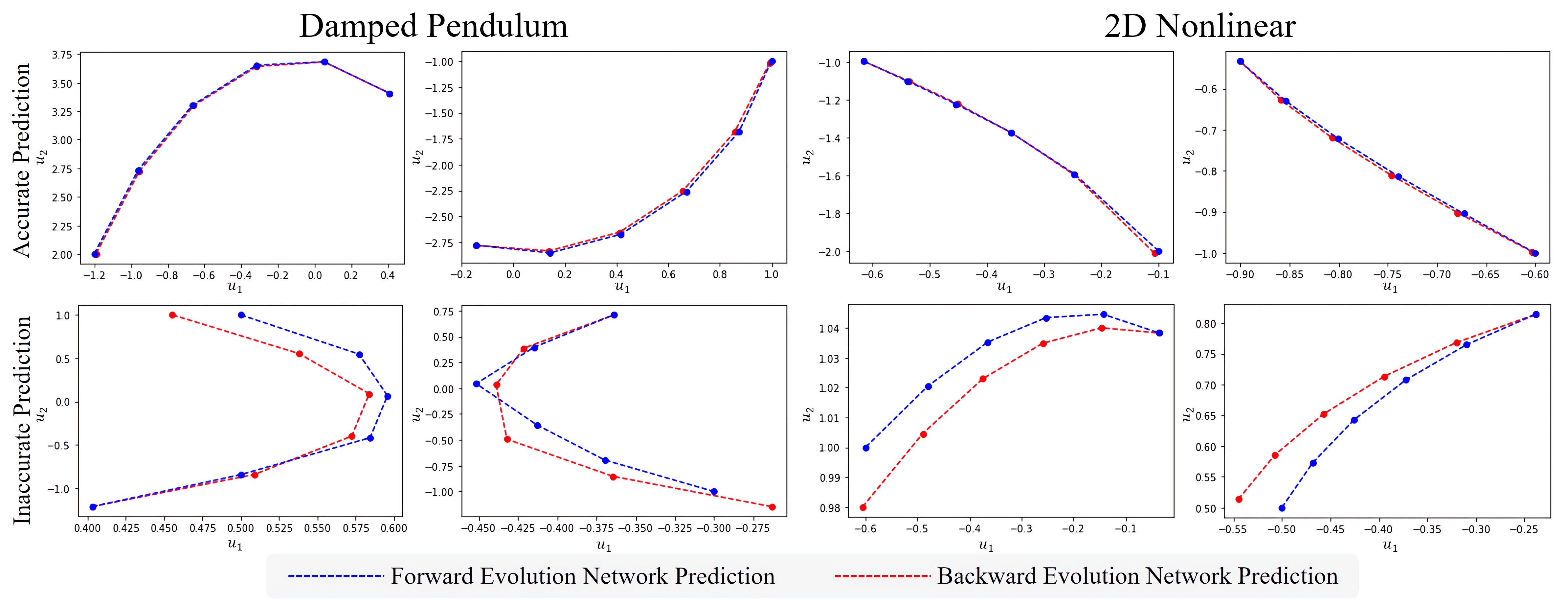}}
\vspace{-0.12in}
\caption{Examples of multi-step reciprocal prediction errors on Damped Pendulum and 2D Nonlinear ODE systems. After training the forward and backward evolution networks with 225 samples on the Damped Pendulum system and 467 samples on the 2D Nonlinear system, we take the locations with large/small multi-step reciprocal prediction error as the starting point in these example trajectories. Here, we take the reciprocal step as $K=5$.}
\label{fig:reciprocal}
\end{center}
\end{figure*}

\begin{figure*}[ht]
\vspace{-0.1in}
\begin{center}
\centerline{\includegraphics[width=0.965\linewidth]{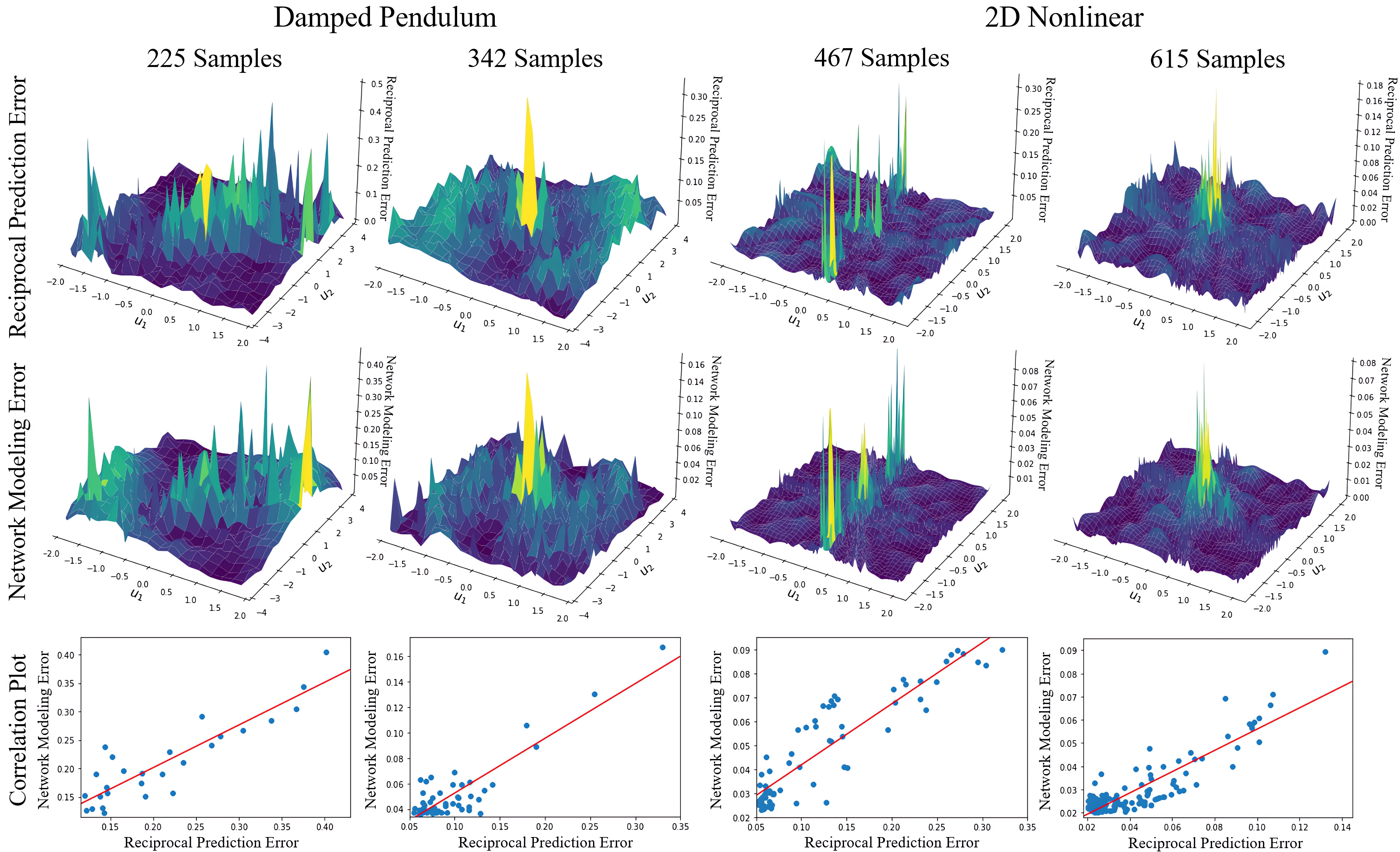}}
\vspace{-0.1in}
\caption{Correlation between network modeling error and multi-step reciprocal prediction error on Damped Pendulum and 2D Nonlinear ODE systems.}
\label{fig:correlation}
\end{center}
\vspace{-22pt}
\end{figure*}

\textbf{(2) Predicting the network modeling error.}
In this work, we find that there is a strong correlation between the network modeling error  ${\mathcal E}[\mathbf{u}(0)]$ and the multi-step reciprocal prediction error $\mathbb{E}[\mathbf{u}(0)]$. 
Figure \ref{fig:correlation} shows four examples of 
 $\mathbb{E}[\mathbf{u}(0)]$ (top row) and ${\mathcal E}[\mathbf{u}(0)]$ (middle row) for the Damped Pendulum and 2D Nonlinear system with different sizes of training samples. The bottom row shows the values of ${\mathcal E}[\mathbf{u}(0)]$ and $\mathbb{E}[\mathbf{u}(0)]$ of locations with large errors. We can see that there is a strong correlation between the network modeling error ${\mathcal E}[\mathbf{u}(0)]$ and the multi-step reciprocal prediction error $\mathbb{E}[\mathbf{u}(0)]$. 
 This correlation allows us to predict ${\mathcal E}[\mathbf{u}(0)]$ using $\mathbb{E}[\mathbf{u}(0)]$ which can be computed directly from the current system state without the need to know the ground-truth state.

\textbf{(3) Critical sampling and adaptive evolution operator learning.}
\label{sec:algorithm}
Once we are able to predict the network modeling error ${\mathcal E}[\mathbf{u}(0)]$ using the multi-step reciprocal prediction error $\mathbb{E}[\mathbf{u}(0)]$, we can develop a critical sampling and adaptive evolution learning algorithm. The central idea is  to select samples from locations with large values of error $\mathbb{E}[\mathbf{u}(0)]$ using the following iterative peak finding algorithm. 
Note that $\mathbf{u}(0)\in \mathbb{R}^n$. Write $\mathbf{u}(0)=[u_1, u_2, \cdots, u_n]$. 
Let ${\mathbf{\mathcal{S}}_F^{m}}=\{[{\bf u}^j(0)\rightarrow {\bf u}^j(\Delta)] :  1\le j\le {J_{m}}\} $ 
be the current sample set. 
To determine the locations of new samples, 
$\{{\bf u}^{j}(0)|{J_m}+1\le j\le {J_{m+1}}\}$, we find the peak value of multi-step reciprocal prediction error $\mathbb{E}[\mathbf{u}(0)]$ at every sampling point 
$\mathbf{u}(0)$ in the solution space $D$. In our experiment, we choose the sample point $\mathbf{u}(0)$ from the augmented sample set 
$\bar{S_F^m}$ defined in (\ref{eq:augmentation2}) of the following section.
The corresponding peak location is chosen to be $ {\bf u}^{J_m+1}(0)$ and the corresponding sample $[{\bf u}^{J_m+1}(0)\rightarrow {\bf u}^{J_m+1}(\Delta)]$ is collected. 
This process is repeated for $J_{m+1}-J_m$ times to collect $J_{m+1}-J_m$ samples in $\mathbf{\Omega}_m$, which is added to the current sample set:
\begin{equation}
        {\mathbf{\mathcal{S}}_F^{m+1}} \!= \!
        {\mathbf{\mathcal{S}}_F^{m}}
        \bigcup 
        {\mathbf{\Omega}^m} \!=\!
        \{[{\bf u}^j(0)\!\rightarrow \!{\bf u}^j(\Delta)]\! :  1\!\le\! j\!\le \!
        {J_{m+1}}\}.
\end{equation}


\subsection{Joint Spatial Dynamics and Temporal Evolution  Learning}
\label{sec:spatiotemporal}
\looseness = -1
Joint spatial dynamics and temporal evolution learning aims to couple local dynamics learning in the spatial domain and evolution learning in the temporal domain to achieve robust system evolution learning from the small set of selected critical samples. 

\looseness = -1
\textbf{(1) Sample augmentation based on local spatial dynamics.} Let 
${\mathbf{\mathcal{S}}_F^{m}} = \{[{\bf u}^j(0)\rightarrow {\bf u}^j(\Delta)] : 1\le j\le {J_m}\}
$ 
be the current set of samples collected from the dynamical system. In this work, $J_m$ is a small number. For example, in our experiments, $J_m$ is in the range of a few hundred.
From our experiments, we find that this learning process is unstable since the number of samples is too small. Our central idea is to design a spatial dynamics network to learn the local spatial dynamics so that we can predict or interpolate more samples at unknown locations from existing samples at known locations.
Specifically, let 
$\{{\bf v}^i(0) \in \mathbb{R}^n:  1\le i\le I\}$ be a large set of randomly selected points in the state space of $\mathbb{R}^n$. Let ${\bf v}^i(\Delta) = \mathbf{\Phi}_\Delta ({\bf v}^i(0))$ be the system's future state at  time $\Delta$ when its current state is ${\bf v}^i(0)$. Here, $\{{\bf v}^i(\Delta)\}$ are  predicted by the spatial dynamics network from the existing sample set $\mathbf{\mathcal{S}}_F^m$, denoted by
$\hat{\bf v}^i(\Delta)$.
They are added to 
$\mathbf{\mathcal{S}}_F^m$ as augmentation samples
\begin{equation}
    {\mathbf{\bar{\mathcal{S}}}_F^m} = 
    {\mathbf{\mathcal{S}}_F^m}
    \bigcup \mathbf{\mathcal{V}}_F,
    \label{eq:augmentation2}
\end{equation}
\begin{equation}
    \mathbf{\mathcal{V}}_F\! = \!\{[{\bf v}^i(0)\!\rightarrow\! \hat{\bf v}^i(\Delta)] : 1\!\le\! i\!\le\! I,\ 
    \hat{\bf v}^i(\Delta)\!=\!{\mathbf{\Gamma}_w^m}[{\mathbf{\mathcal{S}}_F^m}; {\bf v}^i(0)] \},
\label{eq:augmentation}
\end{equation}
where ${\mathbf{\Gamma}_w^m}[{\mathbf{\mathcal{S}}_F^m}; {\bf v}^i(0)] $ represent the spatial dynamics network at $m$-th iteration which predicts the future state of ${\bf v}^i(0)$ based local spatial change patterns using the existing samples $\mathbf{\mathcal{S}}_F^m$.
With this augmented sample set $\mathbf{\bar{\mathcal{S}}}_F^m$, we can train the temporal evolution network $\mathcal{F}_\theta^m$.

\textbf{(2) Learning the local spatial dynamics.} 
The dynamical system may exhibit highly nonlinear and complex behavior in the whole spatial domain, which could be challenging to be accurately modeled and predicted. However, within a small local neighborhood, its behavior will be much simpler and can be effectively learned by our spatial dynamics network $\mathbf{\Gamma}_w$. Specifically, 
given an arbitrary point 
${\bf v}(0)=[v_1, v_2, \cdots, v_n]$ in $\mathbb{R}^n$, we find its nearest $H$ points from the existing sample set 
$\mathbf{\mathcal{S}}_F^m$, and the corresponding samples are denoted by 
$
    \mathbf{\mathcal{S}_{{\bf v}(0)}} = \{[{\bf z}^h(0)\rightarrow {\bf z}^h(\Delta)] : 1\le h\le H\}
$, 
which are the input to the  spatial dynamics network.
We use a $p$th-order $n$-variate polynomial 
$\mathcal{P}(\mathbf{u})=\mathcal{P}[c_1, c_2, \cdots, c_P] (u_1, u_2, \cdots, u_n)$ to locally approximate the local spatial dynamics.
The coefficients of the polynomial are $[c_1, c_2, \cdots, c_P]$, which are predicted by the spatial dynamics network $\mathbf{\Gamma}_w$. For example, if $p=1$, this becomes a linear approximation with $P=n+1$ coefficients. If $p=2$, the number of coefficients, or the size of the network output becomes $P=\frac12 (n+2)(n+1)$. 
To summarize, the task of  the  spatial dynamics network $\mathbf{\Gamma}_w$ is to predict the coefficients of the polynomial $\mathcal{P}(\mathbf{u})$ from the set of $H$
neighboring samples $\mathbf{\mathcal{S}_{{\bf v}(0)}}$  so that 
$
    \mathcal{P}(\mathbf{u})|_{\mathbf{u}={\bf v}(0)} = {\bf v}(\Delta),
$
where ${\bf v}(\Delta)$ is the future state of the system at time $\Delta$ when its current state is 
${\bf v}(0)$, or ${\bf v}(\Delta) = \mathbf{\Phi}_\Delta ({\bf v}(0))$.

When training the spatial dynamics network $\mathbf{\Gamma}_w^m$, we can choose the sample from the existing sample set $\mathbf{\mathcal{S}}_F^m$ as the input 
${\bf v}(0)$ and the corresponding output as ${\bf v}(\Delta)$. The $L_2$ loss between {the predicted state} at time $\Delta$ and its true value for ${\bf v}(0)$, namely, 
\vspace{-3pt}
\begin{equation}
    \mathbb{L}_{SDN} = \sum_{{\bf v}(0)\in {\mathbf{\mathcal{S}}_F^m}} \Big\|{\bf v}(\Delta) -  {\mathbf{\Gamma}_w^m}[{\mathbf{\mathcal{S}}_F^m}; {\bf v}(0)] \Big\|^2.
\end{equation}

\textbf{(3) Joint learning of spatial dynamics and temporal evolution networks.} 
The temporal evolution network $\mathbf{\mathcal{F}}_\theta$ and the spatial dynamics network $\mathbf{\Gamma}_w$ aim to characterize the system behavior from two different perspectives, the temporal and spatial domains. 
In this work, we couple these two networks so that they can learn more effectively. 
Specifically, we use the spatial dynamics network $\mathbf{\Gamma}_w^m$ to generate a large set of samples
$\mathbf{\mathcal{V}}_F$, which is added to the existing samples 
$\mathbf{\mathcal{S}}_F^m$, as explained in (\ref{eq:augmentation}). 
This augmented sample set $\mathbf{\bar{\mathcal{S}}}_F^m$ is used to train the temporal evolution network $\mathbf{\mathcal{F}}_\theta^m$. Note that both networks 
are predicting the system future state from spatial and temporal domains. Therefore, we can introduce a consistency constraint between them. Specifically, let $\mathbf{\mathcal{Q}}=\{\mathbf{q}_l\in \mathbb{R}^n: 1\le l \le L \}$ be a set of randomly generated points in $\mathbb{R}^n$. 
We  use both networks to predict the future state at time $\Delta$ for each $\mathbf{q}_l$ at the initial state. The following consistency loss is used to train both networks:
\vspace{-3pt}
\begin{equation}
    \mathbb{L}_{C} = \sum_{l=1}^L \Big \|
    {\mathbf{\mathcal{F}}_\theta^m}
    [\mathbf{q}_l]- 
    {\mathbf{\Gamma}_w^m}
    [{\mathbf{\mathcal{S}}_F^m}; \mathbf{q}_l] \Big\|^2.
\end{equation}

\subsection{Theoretical Understanding}\label{sec:theoretical}
\looseness = -1
In this section, we provide some mathematical analysis results to understand and characterize the performance of the proposed critical sampling and adaptive evolution learning method, especially on the error bound of evolution operator learning. 

Let us consider the following autonomous ODE system as an example:
\begin{equation} \label{eq:ODE}
    \frac{d{\bf u}(t)}{dt} = {\mathcal H}({\bf u}(t)), \quad t \in \mathbb{R}^+,
\end{equation}
where ${\bf u}(t)\in \mathbb{R}^n$ are the state variables. 
Let $\mathbf{\Phi}_\Delta: \mathbb{R}^n\rightarrow \mathbb{R}^n$ be the {\em evolution operator}, which maps the system state from time $t=0$ to its next state at time $\Delta$: $    {\bf u}(t+\Delta) = \mathbf{\Phi}_\Delta({\bf u}(t)).$
It should be noted that, for autonomous systems, this evolution operator
$\mathbf{\Phi}_\Delta$ remains invariant for different time instance $t$. It only depends on the time difference $\Delta$.

\begin{lemma}\label{lem:backisforward}
For autonomous systems, the backward evolution operator $\mathbf{\Psi}_\Delta$ of system \eqref{eq:ODE} is actually the forward evolution operator of the following dynamical system 
	\begin{equation}\label{eq:backward-system}
		\frac{d}{dt}{\bf \bar u}(t) = - {\mathcal H}({\bf \bar u}(t)).
	\end{equation}
\end{lemma}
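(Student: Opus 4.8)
The plan is to prove the lemma by a time-reversal substitution, turning a solution of the original system \eqref{eq:ODE} into a solution of the reversed system \eqref{eq:backward-system} and then matching the corresponding endpoint maps. First I would fix the definitions precisely: for any trajectory of \eqref{eq:ODE}, the forward operator satisfies $\mathbf{\Phi}_\Delta(\mathbf{u}(t))=\mathbf{u}(t+\Delta)$, and the backward operator $\mathbf{\Psi}_\Delta$ is defined as its inverse, $\mathbf{\Psi}_\Delta(\mathbf{u}(t+\Delta))=\mathbf{u}(t)$. I would also note at the outset that some regularity on $\mathcal{H}$ (e.g.\ Lipschitz continuity) is needed so that the flow of \eqref{eq:ODE} exists, is unique, and is invertible; this guarantees both operators are well defined and that autonomy makes them depend only on $\Delta$.

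The core step is the substitution. Let $\mathbf{u}(\cdot)$ solve \eqref{eq:ODE} with $\mathbf{u}(0)=\mathbf{u}_0$, so that $\mathbf{u}(\Delta)=\mathbf{\Phi}_\Delta(\mathbf{u}_0)$. I would define $\bar{\mathbf{u}}(s):=\mathbf{u}(\Delta-s)$ and verify by the chain rule that $\frac{d}{ds}\bar{\mathbf{u}}(s)=-\dot{\mathbf{u}}(\Delta-s)=-\mathcal{H}(\mathbf{u}(\Delta-s))=-\mathcal{H}(\bar{\mathbf{u}}(s))$, so $\bar{\mathbf{u}}$ is exactly a trajectory of the reversed system \eqref{eq:backward-system}. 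Its initial value is $\bar{\mathbf{u}}(0)=\mathbf{u}(\Delta)$, and after evolving for time $\Delta$ under \eqref{eq:backward-system} its value is $\bar{\mathbf{u}}(\Delta)=\mathbf{u}(0)=\mathbf{u}_0$.

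It then remains to read off the operator identity. Denoting by $\bar{\mathbf{\Phi}}_\Delta$ the forward evolution operator of the reversed system \eqref{eq:backward-system}, the previous step gives $\bar{\mathbf{\Phi}}_\Delta(\mathbf{u}(\Delta))=\mathbf{u}_0$, while the definition of the backward operator gives $\mathbf{\Psi}_\Delta(\mathbf{u}(\Delta))=\mathbf{u}_0$ as well. Since the initial state $\mathbf{u}_0$ is arbitrary and the map $\mathbf{u}_0\mapsto\mathbf{u}(\Delta)=\mathbf{\Phi}_\Delta(\mathbf{u}_0)$ is a bijection of $\mathbb{R}^n$, the argument $\mathbf{u}(\Delta)$ ranges over the whole state space, so the two operators agree everywhere and $\mathbf{\Psi}_\Delta=\bar{\mathbf{\Phi}}_\Delta$.

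I do not expect a genuine obstacle here, as this is a standard consequence of time reversal for autonomous flows; the only points requiring care are bookkeeping the direction of time in the substitution $s\mapsto\Delta-s$ and invoking enough regularity on $\mathcal{H}$ to ensure the flows exist, are unique, and are invertible so that $\mathbf{\Psi}_\Delta$ is well defined as the inverse of $\mathbf{\Phi}_\Delta$. If one wishes to avoid uniqueness assumptions, the identity can instead be verified at the level of the vector fields by differentiating the flows at $\Delta=0$, but the trajectory-based argument above is the cleanest route.
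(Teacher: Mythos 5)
Your proposal is correct and follows essentially the same route as the paper: a time-reversal substitution $\bar{\mathbf{u}}(s)=\mathbf{u}(T-s)$ (you take $T=\Delta$, the paper keeps $T$ arbitrary), a chain-rule verification that $\bar{\mathbf{u}}$ solves the sign-flipped ODE, and an identification of the resulting endpoint map with $\mathbf{\Psi}_\Delta$. Your added remarks on regularity and surjectivity of the argument are reasonable extra care but do not change the argument.
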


\begin{proof}
	Define ${\bf \bar u}(t) :={\bf u}(T-t)$ for an arbitrarily fixed $T>0$. It can be seen that 
	\begin{align*}
	\frac{d}{dt}{\bf \bar u}(t) &= \frac{d}{dt} ( {\bf u}(T-t) ) = - \left. \frac{d{\bf u}}{dt} \right|_{T-t} \\
	&
	= - {\mathcal H} ( {\bf u}(T-t) ) = - {\mathcal H}({\bf \bar u}(t)).
	\end{align*}
	This means ${\bf \bar u}(t)$ satisfies the ODEs \eqref{eq:backward-system}. The forward evolution operator of system \eqref{eq:backward-system}, which maps ${\bf \bar u}(t)$ to ${\bf \bar u}(t+\Delta)$, is equivalent to the mapping from 
	${\bf u}(T-t)$ to ${\bf u}(T-t-\Delta)$, which exactly coincides with the backward evolution operator $\mathbf{\Psi}_\Delta$  of system \eqref{eq:ODE}. The proof is completed. 
\end{proof}

It should be noted that the forward and backward evolution operators, denoted as $\mathbf{\Phi}_\Delta$ and $\mathbf{\Psi}_\Delta$ respectively,  are exact inverses of one another. In the following analysis, we assume $\mathcal H$ is Lipschitz continuous with Lipschitz constant $C_{\mathcal H}$ on a set $D \subset \mathbb R^n$. Here $D$ is a bounded region where we are interested in the solution behavior.

\begin{lemma}\label{lem:forwardPHI}
Define 
$$
\hat D_{\Delta} :=\{ {\bf u}\in D: \mathbf{\Phi}_t ( {\bf u} ) \in D~~ \forall t\in[0,\Delta]  \}.
$$ 
The forward evolution operator $\mathbf{\Phi}_\Delta$ of system \eqref{eq:ODE} is Lipschitz continuous on $\hat D_{\Delta}$, i.e., for any 
${\bf u}_1, {\bf u}_2 \in \hat D_{\Delta}$, 
\begin{equation}\label{eq:LP1}
	\left\| \mathbf{\Phi}_\Delta( {\bf u}_1 ) - \mathbf{\Phi}_\Delta( {\bf u}_2 )  \right\|_2 \le e^{ C_{\mathcal H} \Delta} \| {\bf u}_1- {\bf u}_2 \|_2.
\end{equation}
\end{lemma}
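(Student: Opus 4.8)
The plan is to recognize this as the classical continuous-dependence-on-initial-data estimate for ODEs and to establish it through Gr\"onwall's inequality applied to the integral form of the two trajectories. First I would fix two initial states ${\bf u}_1, {\bf u}_2 \in \hat D_{\Delta}$ and write the corresponding solutions as ${\bf u}_i(t) := \mathbf{\Phi}_t({\bf u}_i)$ for $i=1,2$. Since $\mathcal H$ is Lipschitz on $D$, the Picard--Lindel\"of theorem guarantees that these solutions exist and are unique, so $\mathbf{\Phi}_t$ is well defined. Rewriting each solution in integral form, ${\bf u}_i(t) = {\bf u}_i + \int_0^t \mathcal H({\bf u}_i(s))\, ds$, I would subtract the two identities and take the Euclidean norm, using the triangle inequality for integrals, to obtain $\|{\bf u}_1(t) - {\bf u}_2(t)\|_2 \le \|{\bf u}_1 - {\bf u}_2\|_2 + \int_0^t \|\mathcal H({\bf u}_1(s)) - \mathcal H({\bf u}_2(s))\|_2 \, ds$.

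The next step is to invoke the Lipschitz bound on $\mathcal H$ pointwise along the trajectories, which turns the above into a self-referential integral inequality. This is precisely where the definition of $\hat D_{\Delta}$ does the essential work: because both initial points lie in $\hat D_{\Delta}$, their entire forward orbits satisfy ${\bf u}_1(s), {\bf u}_2(s) \in D$ for every $s\in[0,\Delta]$, so the estimate $\|\mathcal H({\bf u}_1(s)) - \mathcal H({\bf u}_2(s))\|_2 \le C_{\mathcal H} \|{\bf u}_1(s) - {\bf u}_2(s)\|_2$ is valid for all relevant $s$. Substituting it in and writing $g(t) := \|{\bf u}_1(t) - {\bf u}_2(t)\|_2$ yields $g(t) \le g(0) + C_{\mathcal H} \int_0^t g(s)\, ds$ with $g(0) = \|{\bf u}_1 - {\bf u}_2\|_2$.

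Finally I would apply the integral form of Gr\"onwall's inequality to conclude $g(t) \le g(0)\, e^{C_{\mathcal H} t}$ for all $t \in [0,\Delta]$, and then set $t = \Delta$ to recover the claimed bound \eqref{eq:LP1}.

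The main obstacle is not the algebra but the justification that the Lipschitz inequality for $\mathcal H$ may be applied along the \emph{entire} path joining time $0$ to time $\Delta$; this requires knowing that neither trajectory leaves $D$ before time $\Delta$, which is exactly what membership in $\hat D_{\Delta}$ secures (the condition $\mathbf{\Phi}_t({\bf u}) \in D\ \forall t\in[0,\Delta]$ controls the full forward orbit of each endpoint). A secondary point of care is to state Gr\"onwall's inequality in the form matching the integral inequality derived above, and to keep in mind that the bound is established for every intermediate time, not merely the endpoint, which is what permits the clean exponential constant $e^{C_{\mathcal H}\Delta}$.
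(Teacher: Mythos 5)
Your proof is correct. The paper itself gives no argument for this lemma---it simply cites it as a classical continuous-dependence result from the dynamical systems literature (Stuart and Humphries)---and your Gr\"onwall-based derivation is precisely the standard proof of that classical result: integral form of the two trajectories, Lipschitz bound on $\mathcal H$ valid along the full orbits because membership in $\hat D_\Delta$ keeps both paths inside $D$ on $[0,\Delta]$, then the integral form of Gr\"onwall's inequality. You also correctly isolate the one point where the hypothesis is actually used, namely that the set $\hat D_\Delta$ is defined exactly so that the pointwise Lipschitz estimate on $\mathcal H$ applies at every intermediate time.
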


\begin{proof}
	This follows from a classical result in the dynamical system; see \cite{stuart1998dynamical}.
\end{proof}

\begin{lemma}\label{lem:backwardPSI}
	Define 
	$$
	\bar D_{\Delta} :=\{ {\bf u}\in D: \mathbf{\Psi}_t ( {\bf u} ) \in D~~ \forall t\in[0,\Delta]  \}.
	$$ 
	The backward evolution operator $\mathbf{\Psi}_\Delta$ of system \eqref{eq:ODE} is Lipschitz continuous on $\bar D_{\Delta}$, i.e., for any 
	${\bf u}_1, {\bf u}_2 \in \bar D_{\Delta}$, 
\begin{equation}\label{eq:LP2}
	\left\| \mathbf{\Psi}_\Delta( {\bf u}_1 ) - \mathbf{\Psi}_\Delta( {\bf u}_2 )  \right\|_2 \le e^{ C_{\mathcal H} \Delta} \| {\bf u}_1- {\bf u}_2 \|_2.
\end{equation}
\end{lemma}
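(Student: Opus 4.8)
The plan is to reduce this statement to Lemma \ref{lem:forwardPHI} rather than re-deriving a Grönwall estimate from scratch. The crucial observation is that Lemma \ref{lem:backisforward} has already identified $\mathbf{\Psi}_\Delta$ as the \emph{forward} evolution operator of the time-reversed system \eqref{eq:backward-system}, whose vector field is $-{\mathcal H}$. So once I check that \eqref{eq:backward-system} satisfies the same hypotheses that Lemma \ref{lem:forwardPHI} requires, I can apply that lemma verbatim to \eqref{eq:backward-system} and simply read off \eqref{eq:LP2}.

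First I would verify that $-{\mathcal H}$ is Lipschitz on $D$ with the \emph{same} constant $C_{\mathcal H}$: for any ${\bf u}_1, {\bf u}_2\in D$ we have $\| -{\mathcal H}({\bf u}_1) + {\mathcal H}({\bf u}_2) \|_2 = \| {\mathcal H}({\bf u}_1) - {\mathcal H}({\bf u}_2) \|_2 \le C_{\mathcal H}\|{\bf u}_1 - {\bf u}_2\|_2$, since negating the field does not change its Lipschitz constant. Next I would match the domains. Lemma \ref{lem:forwardPHI}, applied to system \eqref{eq:backward-system}, restricts attention to those initial states whose forward trajectory under that system stays in $D$ for all $t\in[0,\Delta]$. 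Because the forward flow of \eqref{eq:backward-system} is exactly $\mathbf{\Psi}_t$, this invariant set is precisely $\bar D_{\Delta} = \{ {\bf u}\in D : \mathbf{\Psi}_t({\bf u})\in D~\forall t\in[0,\Delta] \}$, which is the domain named in the lemma. Lemma \ref{lem:forwardPHI} then yields, for all ${\bf u}_1, {\bf u}_2\in\bar D_{\Delta}$, the bound $\| \mathbf{\Psi}_\Delta({\bf u}_1) - \mathbf{\Psi}_\Delta({\bf u}_2) \|_2 \le e^{C_{\mathcal H}\Delta}\|{\bf u}_1 - {\bf u}_2\|_2$, which is exactly \eqref{eq:LP2}.

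There is essentially no hard step here; the entire content lies in the reduction, and the only place a careful reader might object is the domain bookkeeping, namely confirming that the forward-invariance condition built into Lemma \ref{lem:forwardPHI} for the backward system coincides with the definition of $\bar D_{\Delta}$ phrased through $\mathbf{\Psi}_t$. If I preferred a self-contained argument instead of invoking Lemma \ref{lem:forwardPHI}, I would set ${\bf w}(t) := \mathbf{\Psi}_t({\bf u}_1) - \mathbf{\Psi}_t({\bf u}_2)$, differentiate $\|{\bf w}(t)\|_2^2$, use the Lipschitz bound on $-{\mathcal H}$ to obtain $\frac{d}{dt}\|{\bf w}(t)\|_2^2 \le 2 C_{\mathcal H}\|{\bf w}(t)\|_2^2$, and apply Grönwall's inequality to recover the factor $e^{C_{\mathcal H}\Delta}$. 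But leaning on the already-proved Lemma \ref{lem:forwardPHI} is cleaner and avoids duplicating that computation, so that is the route I would take.
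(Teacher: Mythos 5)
Your proposal is correct and follows essentially the same route as the paper: invoke Lemma \ref{lem:backisforward} to identify $\mathbf{\Psi}_\Delta$ as the forward evolution operator of the time-reversed system \eqref{eq:backward-system} and then apply the argument of Lemma \ref{lem:forwardPHI}. Your added checks---that $-{\mathcal H}$ inherits the Lipschitz constant $C_{\mathcal H}$ and that the invariant set for the reversed system is exactly $\bar D_{\Delta}$---are details the paper leaves implicit, and they only strengthen the write-up.
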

\begin{proof}
According to Lemma \ref{lem:backisforward}, $\mathbf{\Psi}_\Delta$ is the forward evolution operator of system \eqref{eq:backward-system}. Following the idea of Lemma \ref{lem:forwardPHI} for $\mathbf{\Psi}_\Delta$ one can complete the proof. 
\end{proof}

We now derive a simple generic bound for the prediction error of our network model. Suppose the generalization error of the trained neural network is bounded: 
\begin{equation}\label{eq:FG_bound}
\begin{aligned}
&	\| {\mathcal F}_\theta - \mathbf{\Phi}_\Delta  \|_{L^\infty (D )} =: \epsilon_f < +\infty, \\
& \| {\mathcal G}_\vartheta - \mathbf{\Psi}_\Delta  \|_{L^\infty (D )} =: \epsilon_g < +\infty,
\end{aligned}
\end{equation}
where the supremum norm of a vector function $\mathbf{f}:\mathbb{R}^n\to\mathbb{R}^n$ over the  domain $D\subset\mathbb{R}^n$, denoted as $\| \mathbf{f} \|_{L^\infty (D )}$, is defined as
\begin{equation*}
    \| \mathbf{f} \|_{L^\infty (D )} = \underset{\mathbf{x}\in D}{\max} \; \| \mathbf{f}(\mathbf{x}) \|_2
\end{equation*}
with $\| \cdot \|_2$ denotes the vector 2-norm (i.e., Euclidean norm).  
Let $\hat {\bf u}^{(k)}$ be the forward predicted solution by the trained primal network model at time $t^{(k)} := t_0 + k \Delta$ starting from $t_0$, and $\bar {\bf u}^{(k)}$ be the backward predicted solution by the trained dual network model at time $t^{(k)}$ starting from $t^{(K)}$, where $0\le k\le K$. 
Denote the corresponding forward prediction error as $\hat {\mathcal E}^{(k)} := \| \hat {\bf u}^{(k)} -  {\bf u} ( t^{(k)} ) \|_2$ and the backward prediction error as $\bar {\mathcal E}^{(k)} := \| \bar {\bf u}^{(k)} -  {\bf u} ( t^{(k)} ) \|_2$, $k=0,1,\dots,K$. We then have the following estimates. 

\begin{theorem}\label{thm:main}
Suppose that the assumption \eqref{eq:FG_bound} holds, then we have: 
\begin{enumerate}
	\item If $\hat {\bf u}^{(k)}, {\bf u} ( t^{(k)} ) \in \hat D_\Delta$ for $0 \le k \le K-1$, then 
\begin{equation}\label{eq:estimate1}
		\hat {\mathcal E}^{(k)} \le \hat {\mathcal E}^{(0)} e^{ C_{\mathcal H} k \Delta  }  
	+  \left(  \frac{ e^{ C_{\mathcal H}  k\Delta  } - 1 }{ e^{ C_{\mathcal H} \Delta } - 1 } \right) \epsilon_f.
\end{equation}
	\item If $\bar {\bf u}^{(k)}, {\bf u} ( t^{(k)} ) \in \bar D_\Delta$ for $0 \le k \le K-1$, then 
\begin{equation}\label{eq:estimate2}
	\bar {\mathcal E}^{(k)} \le \bar {\mathcal E}^{(K)} e^{ C_{\mathcal H} (K-k) \Delta  }  
	+  \left(  \frac{ e^{ C_{\mathcal H}  (K-k) \Delta  } - 1 }{ e^{ C_{\mathcal H} \Delta } - 1 } \right) \epsilon_g.
\end{equation}
	\item
	Furthermore, if we take $\bar {\bf u}^{(0)} = {\bf u} ( t_0 )$ and pass $\bar {\bf u}^{(K)}$ as the input of the trained dual network with  $\bar {\bf u}^{(K)}=\hat {\bf u}^{(K)}$, then 
\begin{equation}\label{key0}
	\begin{aligned}
		\bar {\mathcal E}^{(k)} 
		& \le \min \Bigg\{   \left\| {\mathcal G}_\vartheta^{(K-k)} \circ {\mathcal F}_\theta^{(K-k)} - I \right\|_{L^\infty(D)} 
		\\
		& \qquad + \left(  \frac{ e^{ C_{\mathcal H}  k\Delta  } - 1 }{ e^{ C_{\mathcal H} \Delta } - 1 } \right) \epsilon_f,
		\\
		&
		\quad \left(  \frac{ e^{ C_{\mathcal H}  K\Delta  } - 1 }{ e^{ C_{\mathcal H} \Delta } - 1 } \right)
		\epsilon_f e^{ C_{\mathcal H} (K-k) \Delta  } 
		\\
		& \qquad 
		 + \left(  \frac{ e^{ C_{\mathcal H}  (K-k) \Delta  } - 1 }{ e^{ C_{\mathcal H} \Delta } - 1 } \right) \epsilon_g
		\Bigg\},
	\end{aligned}
\end{equation}	
and in particular when $k=0$, 	
\begin{equation}\label{key}
\begin{aligned}
		&	\| \bar {\bf u}^{(0)} -  {\bf u} ( t_0 ) \|_2 
		\\
	& \le \min \Bigg\{  \left\| {\mathcal G}_\vartheta^{(K)} \circ {\mathcal F}_\theta^{(K)} - I \right\|_{L^\infty(D)},
	\\
	&
	 \quad \left(  \frac{ e^{ C_{\mathcal H}  K\Delta  } - 1 }{ e^{ C_{\mathcal H} \Delta } - 1 } \right) \Big( \epsilon_f e^{ C_{\mathcal H} K \Delta  } 
 +  \epsilon_g \Big)
	\Bigg\},
\end{aligned}
\end{equation}
where $I$ denotes the identity map. 
\end{enumerate} 
\end{theorem}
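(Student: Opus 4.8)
The plan is to reduce everything to a single one-step error recursion, solve it as a geometric sum, and then stitch the forward and backward passes together. For part (1) I would propagate the forward error one step at a time: writing $\hat{\bf u}^{(k+1)} = {\mathcal F}_\theta(\hat{\bf u}^{(k)})$ and ${\bf u}(t^{(k+1)}) = \mathbf{\Phi}_\Delta({\bf u}(t^{(k)}))$, I insert the intermediate term $\mathbf{\Phi}_\Delta(\hat{\bf u}^{(k)})$ and apply the triangle inequality,
\begin{equation*}
\hat{\mathcal E}^{(k+1)} \le \big\| {\mathcal F}_\theta(\hat{\bf u}^{(k)}) - \mathbf{\Phi}_\Delta(\hat{\bf u}^{(k)}) \big\|_2 + \big\| \mathbf{\Phi}_\Delta(\hat{\bf u}^{(k)}) - \mathbf{\Phi}_\Delta({\bf u}(t^{(k)})) \big\|_2 .
\end{equation*}
The first summand is at most $\epsilon_f$ by the generalization bound \eqref{eq:FG_bound}, and the second is at most $e^{C_{\mathcal H}\Delta}\hat{\mathcal E}^{(k)}$ by Lemma \ref{lem:forwardPHI}, provided $\hat{\bf u}^{(k)}, {\bf u}(t^{(k)}) \in \hat D_\Delta$ (exactly the stated hypothesis). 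This gives the affine recursion $\hat{\mathcal E}^{(k+1)} \le e^{C_{\mathcal H}\Delta}\hat{\mathcal E}^{(k)} + \epsilon_f$, whose closed-form solution with ratio $a = e^{C_{\mathcal H}\Delta}$ is the geometric sum $\hat{\mathcal E}^{(k)} \le a^k \hat{\mathcal E}^{(0)} + \epsilon_f (a^k-1)/(a-1)$, i.e.\ \eqref{eq:estimate1}. Part (2) is entirely symmetric: the same argument on the backward pass, using Lemma \ref{lem:backwardPSI} and $\epsilon_g$ in place of Lemma \ref{lem:forwardPHI} and $\epsilon_f$, yields $\bar{\mathcal E}^{(k)} \le e^{C_{\mathcal H}\Delta}\bar{\mathcal E}^{(k+1)} + \epsilon_g$ and hence \eqref{eq:estimate2} after summing from $K$ down to $k$.

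For part (3) I would produce the two quantities inside the $\min$ separately and then take their minimum. With the initial state taken exactly, so that $\hat{\mathcal E}^{(0)}=0$, part (1) at step $K$ gives $\hat{\mathcal E}^{(K)} \le \big( (e^{C_{\mathcal H}K\Delta}-1)/(e^{C_{\mathcal H}\Delta}-1) \big)\epsilon_f$; since $\bar{\bf u}^{(K)} = \hat{\bf u}^{(K)}$ we have $\bar{\mathcal E}^{(K)} = \hat{\mathcal E}^{(K)}$, and substituting this into the part-(2) bound \eqref{eq:estimate2} produces the second argument of the $\min$. For the first argument, the key is a composition identity: because ${\mathcal F}_\theta^{(K)} = {\mathcal F}_\theta^{(K-k)} \circ {\mathcal F}_\theta^{(k)}$ and $\bar{\bf u}^{(k)} = {\mathcal G}_\vartheta^{(K-k)}(\hat{\bf u}^{(K)})$, one obtains $\bar{\bf u}^{(k)} = \big( {\mathcal G}_\vartheta^{(K-k)} \circ {\mathcal F}_\theta^{(K-k)} \big)(\hat{\bf u}^{(k)})$. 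Splitting $\bar{\mathcal E}^{(k)}$ by the triangle inequality against $\hat{\bf u}^{(k)}$ then gives
\begin{equation*}
\bar{\mathcal E}^{(k)} \le \big\| \big( {\mathcal G}_\vartheta^{(K-k)} \circ {\mathcal F}_\theta^{(K-k)} - I \big)(\hat{\bf u}^{(k)}) \big\|_2 + \hat{\mathcal E}^{(k)} ,
\end{equation*}
where the first term is bounded by $\| {\mathcal G}_\vartheta^{(K-k)} \circ {\mathcal F}_\theta^{(K-k)} - I \|_{L^\infty(D)}$ and the second by the part-(1) bound with $\hat{\mathcal E}^{(0)}=0$; this is the first argument of the $\min$. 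Taking the minimum of the two establishes \eqref{key0}, and setting $k=0$ collapses the $\epsilon_f$ fraction to zero and factors the remaining term, giving \eqref{key}.

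I expect the main obstacle to be bookkeeping rather than a conceptual leap: correctly carrying out the semigroup rewriting $\bar{\bf u}^{(k)} = ({\mathcal G}_\vartheta^{(K-k)} \circ {\mathcal F}_\theta^{(K-k)})(\hat{\bf u}^{(k)})$ that unlocks the first bound, and tracking which domain-inclusion hypotheses ($\hat{\bf u}^{(k)}, {\bf u}(t^{(k)}) \in \hat D_\Delta$ for the forward step, $\bar{\bf u}^{(k)}, {\bf u}(t^{(k)}) \in \bar D_\Delta$ for the backward step, and $\hat{\bf u}^{(k)} \in D$ for the $L^\infty$ estimate) must hold at each iteration so that the Lipschitz lemmas and the generalization bounds are legitimately applicable. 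A minor notational point worth flagging is the phrase ``$\bar{\bf u}^{(0)} = {\bf u}(t_0)$'' in the hypothesis, which in the argument should be read as the forward pass starting from the exact state, $\hat{\bf u}^{(0)} = {\bf u}(t_0)$, so that $\hat{\mathcal E}^{(0)} = 0$; this is precisely what makes the $k=0$ specialization \eqref{key} clean.
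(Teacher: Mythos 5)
Your proposal matches the paper's proof essentially line for line: the same one-step triangle-inequality recursion $\hat{\mathcal E}^{(k)} \le e^{C_{\mathcal H}\Delta}\hat{\mathcal E}^{(k-1)} + \epsilon_f$ solved as a geometric sum for parts (1) and (2), and the same two-pronged argument for part (3), combining the forward and backward estimates for one branch of the $\min$ and using the composition identity $\bar{\bf u}^{(k)} = ({\mathcal G}_\vartheta^{(K-k)}\circ{\mathcal F}_\theta^{(K-k)})(\hat{\bf u}^{(k)})$ with the $L^\infty$ bound for the other. Your reading of the hypothesis as $\hat{\bf u}^{(0)} = {\bf u}(t_0)$ (so that $\hat{\mathcal E}^{(0)}=0$) is also exactly how the paper uses it.
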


\begin{proof}
Recall that $\hat {\bf u}^{(k)} = {\mathcal F}_\theta ( \hat {\bf u}^{(k-1)} )$ and 
${\bf u}({t^{(k)}}) = \mathbf{\Phi}_\Delta ( {\bf u}({t^{(k-1)}}) )$. Using the triangle inequality for the Euclidean norm, the assumption \eqref{eq:FG_bound}, and the Lipschitz continuity \eqref{eq:LP1} of $\mathbf{\Phi}_\Delta$ in Lemma \ref{lem:forwardPHI}, we can derive that 
\begin{align*}
 \hat {\mathcal E}^{(k)} & = \left\| \hat {\bf u}^{(k)} - {\bf u}({t^{(k)}}) \right\|_2
 \\
 & =  \left\| {\mathcal F}_\theta ( \hat {\bf u}^{(k-1)} ) -  \mathbf{\Phi}_\Delta ( {\bf u}({t^{(k-1)}}) ) \right\|_2 
 \\
 & \le \left\| {\mathcal F}_\theta ( \hat {\bf u}^{(k-1)} ) -  \mathbf{\Phi}_\Delta ( \hat {\bf u}^{(k-1)} ) \right\|_2 
 \\
& \qquad
 + \left\| \mathbf{\Phi}_\Delta ( \hat {\bf u}^{(k-1)} ) -  \mathbf{\Phi}_\Delta ( {\bf u}({t^{(k-1)}}) ) \right\|_2 
 \\
 & \le \| {\mathcal F}_\theta - \mathbf{\Phi}_\Delta  \|_{L^\infty (D )} 
 + e^{ C_{\mathcal H} \Delta}  \left\|  \hat {\bf u}^{(k-1)} -   {\bf u}({t^{(k-1)}})  \right\|_2
 \\
& =  e^{ C_{\mathcal H} \Delta}   \hat {\mathcal E}^{(k-1)} + \epsilon_f.
\end{align*}
Repeatedly utilizing such an estimate leads to 
\begin{align*}
	\hat {\mathcal E}^{(k)} & \le  e^{ C_{\mathcal H} \Delta}   \hat {\mathcal E}^{(k-1)} + \epsilon_f
	\\
	& \le e^{ 2 C_{\mathcal H} \Delta}   \hat {\mathcal E}^{(k-2)} + e^{ C_{\mathcal H} \Delta} \epsilon_f + \epsilon_f 
	\\
	& \le e^{ 3 C_{\mathcal H} \Delta}   \hat {\mathcal E}^{(k-3)} + e^{ 2 C_{\mathcal H} \Delta} \epsilon_f^2 + e^{ C_{\mathcal H} \Delta} \epsilon_f   + \epsilon_f
	\\ 
	& \le \cdots
	\\
	& \le e^{ k C_{\mathcal H} \Delta}   \hat {\mathcal E}^{(0)} + \epsilon_f \sum_{j=0}^{k-1} e^{ j C_{\mathcal H} \Delta},
\end{align*}
which completes the proof of \eqref{eq:estimate1}. Similarly, for the backward prediction procedure, we recall that  
$\bar {\bf u}^{(k)} = {\mathcal G}_\vartheta ( \bar {\bf u}^{(k+1)} )$ and 
${\bf u}({t^{(k)}}) = \mathbf{\Psi}_\Delta ( {\bf u}({t^{(k+1)}}) )$, and then use Lemma \ref{lem:backwardPSI} to deduce that 
\begin{align*}
	\bar {\mathcal E}^{(k)} & = \left\| \bar {\bf u}^{(k)} - {\bf u}({t^{(k)}}) \right\|_2
	\\
	& =  \left\| {\mathcal G}_\vartheta ( \bar {\bf u}^{(k+1)} ) -  \mathbf{\Psi}_\Delta ( {\bf u}({t^{(k+1)}}) ) \right\|_2 
	\\
	& \le \left\| {\mathcal G}_\vartheta ( \bar {\bf u}^{(k+1)} ) -  \mathbf{\Psi}_\Delta ( \bar {\bf u}^{(k+1)} ) \right\|_2 
	\\
    & \qquad
	+ \left\| \mathbf{\Psi}_\Delta ( \bar {\bf u}^{(k+1)} ) -  \mathbf{\Psi}_\Delta ( {\bf u}({t^{(k+1)}}) ) \right\|_2 
	\\
	& \le \| {\mathcal G}_\vartheta - \mathbf{\Psi}_\Delta  \|_{L^\infty (D )}  + e^{ C_{\mathcal H} \Delta}  \left\|  \bar {\bf u}^{(k+1)} -   {\bf u}({t^{(k+1)}})  \right\|_2  
	\\
    & =  e^{ C_{\mathcal H} \Delta}   \bar {\mathcal E}^{(k+1)} + \epsilon_g.
\end{align*}
Repeatedly utilizing such an estimate leads to 
\begin{align*}
	\bar {\mathcal E}^{(k)} & \le  e^{ C_{\mathcal H} \Delta}   \bar {\mathcal E}^{(k+1)} + \epsilon_g
	\\
	& \le e^{ 2 C_{\mathcal H} \Delta}   \bar {\mathcal E}^{(k+2)} + e^{ C_{\mathcal H} \Delta} \epsilon_g + \epsilon_g 
	\\
	& \le e^{ 3 C_{\mathcal H} \Delta}   \bar {\mathcal E}^{(k+3)} + e^{ 2 C_{\mathcal H} \Delta} \epsilon_g^2 + e^{ C_{\mathcal H} \Delta} \epsilon_g   + \epsilon_g
	\\ 
	& \le \cdots
	\\
	& \le e^{ (K-k) C_{\mathcal H} \Delta}   \bar {\mathcal E}^{(K)} + \epsilon_g \sum_{j=0}^{K-k-1} e^{ j C_{\mathcal H} \Delta},
\end{align*}
which completes the proof of \eqref{eq:estimate2}. Furthermore, if we take $\bar {\bf u}^{(0)} = {\bf u} ( t_0 )$ and pass $\bar {\bf u}^{(K)}$ as the input of the trained dual network with  $\bar {\bf u}^{(K)}=\hat {\bf u}^{(K)}$, then $\hat {\mathcal E}^{(0)}=0$ and  $\bar {\mathcal E}^{(K)}=\hat {\mathcal E}^{(K)}$. Combining \eqref{eq:estimate1} and \eqref{eq:estimate2} gives 
\begin{align} \nonumber
\bar {\mathcal E}^{(k)} & \le \hat {\mathcal E}^{(K)} e^{ C_{\mathcal H} (K-k) \Delta  }  
+  \left(  \frac{ e^{ C_{\mathcal H}  (K-k) \Delta  } - 1 }{ e^{ C_{\mathcal H} \Delta } - 1 } \right) \epsilon_g
\\\nonumber
& \le \left(  \frac{ e^{ C_{\mathcal H}  K\Delta  } - 1 }{ e^{ C_{\mathcal H} \Delta } - 1 } \right)
\epsilon_f e^{ C_{\mathcal H} (K-k) \Delta  } 
\\ 
& \qquad
+ \left(  \frac{ e^{ C_{\mathcal H}  (K-k) \Delta  } - 1 }{ e^{ C_{\mathcal H} \Delta } - 1 } \right) \epsilon_g.
\label{eq:003}
\end{align}
On the other hand, we observe that 
\begin{align} \nonumber
	\bar {\mathcal E}^{(k)} & = \left\| \bar {\bf u}^{(k)} - {\bf u}({t^{(k)}}) \right\|_2
	\\ \nonumber
	& \le \left\| \bar {\bf u}^{(k)} - \hat {\bf u}^{(k)} \right\|_2 + \left\| \hat {\bf u}^{(k)} - {\bf u}({t^{(k)}}) \right\|_2 
	\\ \nonumber
	& = \left\| {\mathcal G}_\vartheta^{(K-k)} \circ {\mathcal F}_\theta^{(K-k)} \hat {\bf u}^{(k)}  - \hat {\bf u}^{(k)} \right\|_2 +  \hat {\mathcal E}^{(k)}
	\\ \nonumber
	&\le  \left\| {\mathcal G}_\vartheta^{(K-k)} \circ {\mathcal F}_\theta^{(K-k)} - I \right\|_{L^\infty(D)} 
	+ \hat {\mathcal E}^{(k)}
	\\ \label{eq:002}
	&\le \left\| {\mathcal G}_\vartheta^{(K-k)} \circ {\mathcal F}_\theta^{(K-k)} - I \right\|_{L^\infty(D)} + \left(  \frac{ e^{ C_{\mathcal H}  k\Delta  } - 1 }{ e^{ C_{\mathcal H} \Delta } - 1 } \right) \epsilon_f, 
\end{align}
where we have used the triangular inequity for the Euclidean norm and the estimate \eqref{eq:estimate1}. Here the second inequality in \eqref{eq:002} follows from 
\begin{equation*}
\begin{aligned}
    &\left\| {\mathcal G}_\vartheta^{(K-k)} \circ {\mathcal F}_\theta^{(K-k)} \hat {\bf u}^{(k)} - \hat {\bf u}^{(k)} \right\|_2 \\
    &\qquad\qquad\le \underset{{\bf u} \in \hat D_\Delta}{\max} \; \left\| {\mathcal G}_\vartheta^{(K-k)} \circ {\mathcal F}_\theta^{(K-k)}  {\bf u} - {\bf u} \right\|_2 \\
    &\qquad\qquad= \left\| {\mathcal G}_\vartheta^{(K-k)} \circ {\mathcal F}_\theta^{(K-k)} - I \right\|_{L^\infty(\hat D_\Delta)} \\
    &\qquad\qquad\le \left\| {\mathcal G}_\vartheta^{(K-k)} \circ {\mathcal F}_\theta^{(K-k)} - I \right\|_{L^\infty(D)}.
\end{aligned}
\end{equation*}
Combining \eqref{eq:002} with \eqref{eq:003} gives \eqref{key0} and completes the proof. 
\end{proof}

The analysis suggests that the reciprocal prediction error is correlated with the network modeling error, which provides theoretical support for our finding. 
Critical sampling can help to effectively reduce the reciprocal prediction error and suppress the undesirable error growth, thereby enhancing the accuracy and stability of our model.

\section{Experimental Results}
\label{sec-results}
In this section, we present performance results obtained by our proposed method and baseline method on four dynamical systems to demonstrate the effectiveness of our method.

\subsection{Experimental Settings}
We follow the evaluation procedure used in existing research, for example, those reviewed in Section \ref{sec-related}, to evaluate the performance of our proposed method on specific examples of dynamical systems. We consider four representative systems with ODEs and PDEs as their governing equations, as summarized in  Table \ref{tab-eqs}. 
They include (1) the Damped Pendulum ODE equations in $\mathbb{R}^2$, (2) a nonlinear ODE equation in $\mathbb{R}^2$, (3) the Lorenz system (ODE) in $\mathbb{R}^3$, and (4) the Viscous Burgers' equation (PDE). 
Note that, for the final PDE system, we approximate it in a generalized Fourier space to reduce the problem to finite dimensions as in \cite{Wu_2020_JCP}. We use the projection operator $\mathcal{P}_n: \mathbb{V} \rightarrow \mathbb{V}_n$, where $\mathbb{V}_n=\mathrm{span}\left\{ \sin \left( jx \right): 1 \leqslant j \leqslant n\right\}$ with $n=9$.
Certainly, our proposed method can be also applied to many other dynamical systems, we simply use these four example systems to demonstrate the performance of our new method. 
In Section \ref{subsec-config}, we provide detailed descriptions of how to obtain the training samples for the dynamical systems. 
{An overview and pseudo-code of our method can be found in Section \ref{subsec-algorithm}.}

\begin{table*}[ht]
\caption{Overview of the 4 governing equation systems we demonstrate in this work.}
\vspace{-10pt}
\label{tab-eqs}
\begin{center}
\begin{small}
\begin{sc}
\resizebox{0.8\linewidth}{!}{
\begin{tabular}{ll}
\toprule
System & Governing Equations\\
\hline\vspace{-5pt}\\\
Damped Pendulum Equation & $    
    \begin{cases}
    \frac{d}{dt}{u}_1=u_2,\\
    \frac{d}{dt} {u}_2=-0.2 u_2-8.91 \sin u_1.
    \end{cases}$\vspace{7pt}\\
    \hline\vspace{-5pt}\\
A 2D Nonlinear Equation  & $    
    \begin{cases}
    \frac{d}{dt} {u}_1=u_2-u_1\left( {u_1}^2+{u_2}^2-1\right),\\
    \frac{d}{dt} {u}_2=-u_1-u_2\left( {u_1}^2+{u_2}^2-1\right).
    \end{cases}$\vspace{7pt}\\
\hline\vspace{-5pt}\\
Lorenz System  & $    
    \begin{cases}
    \frac{d}{dt} {u}_1=10\left( u_2 - u_1 \right),\\
    \frac{d}{dt} {u}_2=u_1\left( 28-u_3\right)-u_2,\\
    \frac{d}{dt} {u}_3=u_1u_2-(8/3)u_3.
    \end{cases}$\vspace{7pt}\\
\hline\vspace{-5pt}\\
Viscous Burgers’ Equation & $        \left\{
    \begin{array}{ll}
    u_t+\left( \frac{u^2}{2} \right)_x=0.1 u_{xx}, & (x, t)\in \left( -\pi, \pi\right) \times \mathbb{R}^+,\\
    u(-\pi, t) = u(\pi, t) = 0 , & t \in \mathbb{R}^+.
    \end{array}
    \right.$\vspace{5pt}\\
\bottomrule
\end{tabular}
}
\vspace{-10pt}
\end{sc}
\end{small}
\end{center}

\end{table*}

\subsection{System Configurations} 
\label{subsec-config}
For the ODE examples, we follow the procedure in \cite{qin2019data} to generate the training data pairs
$\{[{\bf u}^j(0), {\bf u}^j(\Delta)]\}$ as follows. First, we generate $J$ system state vectors $\{{\bf u}^j(0)\}_{j=1}^J$ at time 0 based on a uniform distribution over a computational domain $D$. Here, $D$ is the region where we are interested in the solution space. It is typically chosen to be a hypercube prior to computation, which will be explained in the following. Then, for each $j$, starting from
${\bf u}^j(0)$, we solve the true ODEs for a time lag of $\Delta$ using a highly accurate  ODE solver to generate ${\bf u}^j(\Delta)$. {\em Notice that, once the data is generated, we assume that the true equations are unknown, and the sampled data pairs are the only known information during the learning process.}

For the first example dynamical system listed in Table \ref{tab-eqs}, its computational domain is $D=\left[ -\pi, \pi\right] \times \left[ -2\pi, 2\pi\right]$. We choose $\Delta = 0.1$.
For the second system, the computational domain is $D=\left[ -2, 2\right]^2$. The time lag $\Delta$ is set as $0.1$. 
For the third system, the computational domain is $D=\left[ -25, 25\right]^2 \times \left[ 0, 50\right]$. The time lag $\Delta$ is set as $0.01$. 

For the Viscous Burgers' PDE system, because the evolution operator is defined between infinite-dimensional spaces, and we  approximate it in a modal space, namely, a generalized Fourier space, in
order to reduce the problem to finite dimensions as in \cite{Wu_2020_JCP}. 
We follow the same procedure specified in \cite{Wu_2020_JCP} to choose a basis of the finite-dimensional space $\mathbb{V}_n$ to represent the solutions, then apply the projection operator to  project the snapshot data to $\mathbb{V}_n$ to obtain the training data in the generalized Fourier space. The choice of basis functions is fairly flexible, any basis suitable for spatial approximation of the solution data can be used. Once the basis functions are selected, a projection operator $\mathcal{P}_n: \mathbb{V} \rightarrow \mathbb{V}_n$ is applied to obtain the solution in the finite-dimensional form. 
The approximation space is chosen to be relatively larger as $\mathbb{V}_n=\mathrm{span}\left\{ \sin \left( jx \right): 1 \leqslant j \leqslant n\right\}$ with $n=9$. The time lag $\Delta$ is taken as $0.05$. The domain $D$ in the modal space is set as $\left[ -1.5, 1.5\right] \times \left[ -0.5, 0.5 \right] \times \left[ -0.2, 0.2 \right]^2 \times \left[ -0.1, 0.1 \right]^2 \times \left[ -0.05, 0.05 \right]^2 \times \left[ -0.02, 0.02 \right]$, from which we sample the training samples.

Our task is to demonstrate how our proposed method is able to significantly reduce the number of samples needed for evolution learning. Specifically, for the baseline method \cite{qin2019data,Wu_2020_JCP}, we select locations based on a uniform distribution in the solution space to collect samples for evolution operator learning. For example, for the first dynamical system, Damped Pendulum system (ODE) in a 2-dimensional space, the baseline method uses 14,400 samples to achieve an average network modeling error of  0.026. We then use our method to adaptively discover critical samples and refine the evolution network to reach the same or even smaller network modeling error. We demonstrate that, to achieve the same modeling error, our proposed method needs much fewer samples.

\subsection{Implementation Details}
\label{subsec-details}
In all examples, we use the {recursive} ResNet (RS-ResNet) architecture in \cite{he2016deep,qin2019data}, which is a block variant of the ResNet and has been proven in \cite{qin2019data,Wu_2020_JCP} to be highly suitable for learning flow maps and evolution operators.

For all  four systems, the batch size is set as 10.  In the two 2-dimensional ODE systems, we use the one-block ResNet method with each block containing 3 hidden layers of equal width of 20 neurons, while in the 3-dimensional ODE system, we use the one-block ResNet method with each block containing 3 hidden layers of equal width of 30 neurons. For the final PDE system, we use the four-block ResNet method with each block containing 3 hidden layers of equal width of 20 neurons. Adam optimizer with betas equal $(0.9, 0.99)$  is used for training. In the two 2-dimensional ODE systems, all the networks are trained with 150 epochs. In the Lorenz system and Viscous Burgers' equation,  all the networks are trained with 60 epochs. The initial learning rate is set as $10^{-3}$ , and will decay gradually to $10^{-6}$ during the training process. The number of reciprocal steps $K$ is set as 5. All  networks are trained using PyTorch on one single RTX 3060 GPU.

In the four example systems, we evaluate the performance of our models on time duration $t \in [0, 20]$, $t \in [0, 10]$, $t \in [0, 5]$, $t \in [0, 2]$, respectively.
For the first two ODE systems, the network modeling error is evaluated by average MSE error at each time step on 50 different arbitrarily chosen solution trajectories. For the Lorenz system, we evaluate the network by average MSE error at each time step under 50 different initial conditions. For the final PDE system, the network modeling error is evaluated by the average $L_2$ norm error on 100 points at time $t=2$ under 50 different initial conditions.

\renewcommand{\arraystretch}{1.23}
\begin{table*}[ht]
\caption{Samples for learning the system evolution using the baseline method and our method.}
\vspace{-5pt}
\label{tab:samples}
\vspace{-0.05in}
\begin{center}
\begin{small}
\resizebox{0.85\linewidth}{!}{
\begin{tabular}{l|cc|cc|c}
\hline
\multirow{2}{*}{Dynamical System} & \multicolumn{2}{c|}{Baseline} & \multicolumn{2}{c|}{Our Work} & \multirow{2}{*}{ Ratio}  \\ \cline{2-5}
 & Samples & Prediction Error & Samples & Prediction Error & \\ \hline
 
Damped Pendulum  & 14400  &  0.02630 $\pm$ 0.01200 & \textbf{417} & \textbf{0.02411} $\pm$ 0.00991 & 34.53\\ \hline

2D Nonlinear  & 14400  & 0.00037 $\pm$ 0.00021 & \textbf{925} & \textbf{0.00035} $\pm$ 0.00015 & 15.57\\ \hline

Lorenz System  & 1000000  & 0.19685 $\pm$ 0.07768 & \textbf{1765} & \textbf{0.19357} $\pm$ 0.05695 &  566.57\\ \hline

Viscous Burgers' Eq.  & 500000  & 0.01679 $\pm$ 0.00878 & \textbf{19683} & \textbf{0.01652} $\pm$ 0.00818 & 25.40\\ \hline

\end{tabular}
}
\vspace{-0.15in}
\end{small}
\end{center}
\end{table*}

\subsection{Pseudo-Code and Overview of Our Proposed Method}
\label{subsec-algorithm}
Our proposed method of critical sampling and adaptive evolution operator learning algorithm is summarized in Algorithm \ref{alg-critical}. Our proposed method has the following steps. First, we generate $J_{m}$ data pairs based on a uniform distribution over a computational domain $D$ using a highly accurate ODE/PDE solver. Then, we train the spatial dynamics network $\mathbf{\Gamma}_w^m$ using the generated data pairs and use the network to generate a large set of additional samples. After that, we train forward evolution network $\mathbf{\mathcal{F}}_\theta^m$ using the data pairs, and train backward evolution network $\mathbf{\mathcal{G}}_\vartheta^m$ using the reversed data pairs. Multi-step reciprocal prediction errors are evaluated on different locations in the computational domain $D$ with both networks. Finally, we collect the samples from the locations with peak reciprocal prediction errors. Those samples should be added to the initial set and  the whole process is repeated until the network modeling error ${\mathcal E}[\mathbf{u}(0)]$ is  smaller than the threshold.

\SetKwRepeat{Do}{do}{while}
\begin{algorithm}[ht]
\label{alg-critical}
\begin{normalsize}
	\caption{Critical Sampling and Adaptive Evolution Operator Learning Algorithm}
	\LinesNumbered
	\KwIn{Number of samples in the initial set $J_{m}$; Number of samples in the updated set $J_{m+1}$; Training hyper-parameters.}
	\KwOut{Optimized forward evolution network $\mathbf{\mathcal{F}}_\theta$.}
	Generate $J_{m}$ data pairs based on a uniform distribution over a computational domain $D$, initialize the sample set $\mathbf{\mathcal{S}}_F^{m}= \{[{\bf u}^j(0)\rightarrow {\bf u}^j(\Delta)] :  1\le j\le J_{m}\}$\;
	\Repeat{network modeling error  ${\mathcal E}[\mathbf{u}(0)]$ smaller than threshold}{
	    \tcp{Sample augmentation based on local spatial dynamics.}
		Train spatial dynamics network $\mathbf{\Gamma}_w^m$ using $\mathbf{\mathcal{S}}_F^m$\; 
    	Use $\mathbf{\Gamma}_w$ to generate a large set of samples $\mathbf{\mathcal{V}}_F$, add to the existing sample set: $ \mathbf{\bar{\mathcal{S}}}_F^m = \mathbf{\mathcal{S}}_F^m \bigcup \mathbf{\mathcal{V}}_F$\;
    	\vspace{8pt}
    	\tcp{Multi-step reciprocal prediction.}
    	Reverse the data pairs in $\mathbf{\bar{\mathcal{S}}}_F^m$ to get $\mathbf{\bar{\mathcal{S}}}_G^m$\;
    	Train forward evolution network $\mathbf{\mathcal{F}}_\theta^m$ using $\mathbf{\bar{\mathcal{S}}}_F^m$, backward evolution network $\mathbf{\mathcal{G}}_\vartheta^m$ using $\mathbf{\bar{\mathcal{S}}}_G^m$\;
    	Perform $K$-step forward prediction using $\mathbf{\mathcal{F}}_\theta^m$ to get $\hat{{\bf u}}(k\Delta)$\;
    	Perform $K$-step backward prediction using $\mathbf{\mathcal{G}}_\vartheta^m$ to get $\bar{\bf u}(k\Delta)$\;
    	Calculate multi-step reciprocal prediction error $\mathbb{E}[\mathbf{u}(0)] = \sum_{k=0}^K \big\| \hat{\bf u}(k\Delta)-\bar{\bf u}(k\Delta)\big\|^2$\;
    	\vspace{5pt}
    	\tcp{Critical sampling.}
    	Initialize critical sample set $\mathbf{\Omega}_m$ to empty set\;
    	\For{i \textbf{in} \textnormal{\{1, 2, $\dots$, $J_{m+1}-J_{m}$\}}}{
    	    Choose location with peak reciprocal prediction error $\mathbb{E}[\mathbf{u}(0)]$ to be ${\bf u}^{J_{m}+i}(0)$\;
    	    Collect corresponding sample $[{\bf u}^{J_m+i}(0)\rightarrow {\bf u}^{J_m+i}(\Delta)]$, add to $\mathbf{\Omega}_m$.
    	}
    	Add the critical sample set to the current sample set (without augmented samples $\mathbf{\mathcal{V}}_F$);
            $\mathbf{\mathcal{S}}_F^{m+1} =\mathbf{\mathcal{S}}_F^m \bigcup \mathbf{\Omega}_m$\;\vspace{8pt}
    	$\mathbf{\mathcal{S}}_F^{m}=\mathbf{\mathcal{S}}_F^{m+1}$, $J_{m}=J_{m+1}$\;}
\end{normalsize}
\end{algorithm}

\subsection{Performance Results}
\label{sec:performance}
We choose the evolution learning method developed in \cite{qin2019data,Wu_2020_JCP} as our baseline. This method has achieved impressive performance in learning the evolution behaviors of autonomous systems and attracted much attention from the research community. On top of this method, we implement our
proposed method of critical sampling and adaptive evolution learning. We demonstrate that, to achieve the same modeling error, our method needs much fewer samples.

\begin{figure*}[!htbp]
  \centering
    \includegraphics[width=0.85\linewidth]{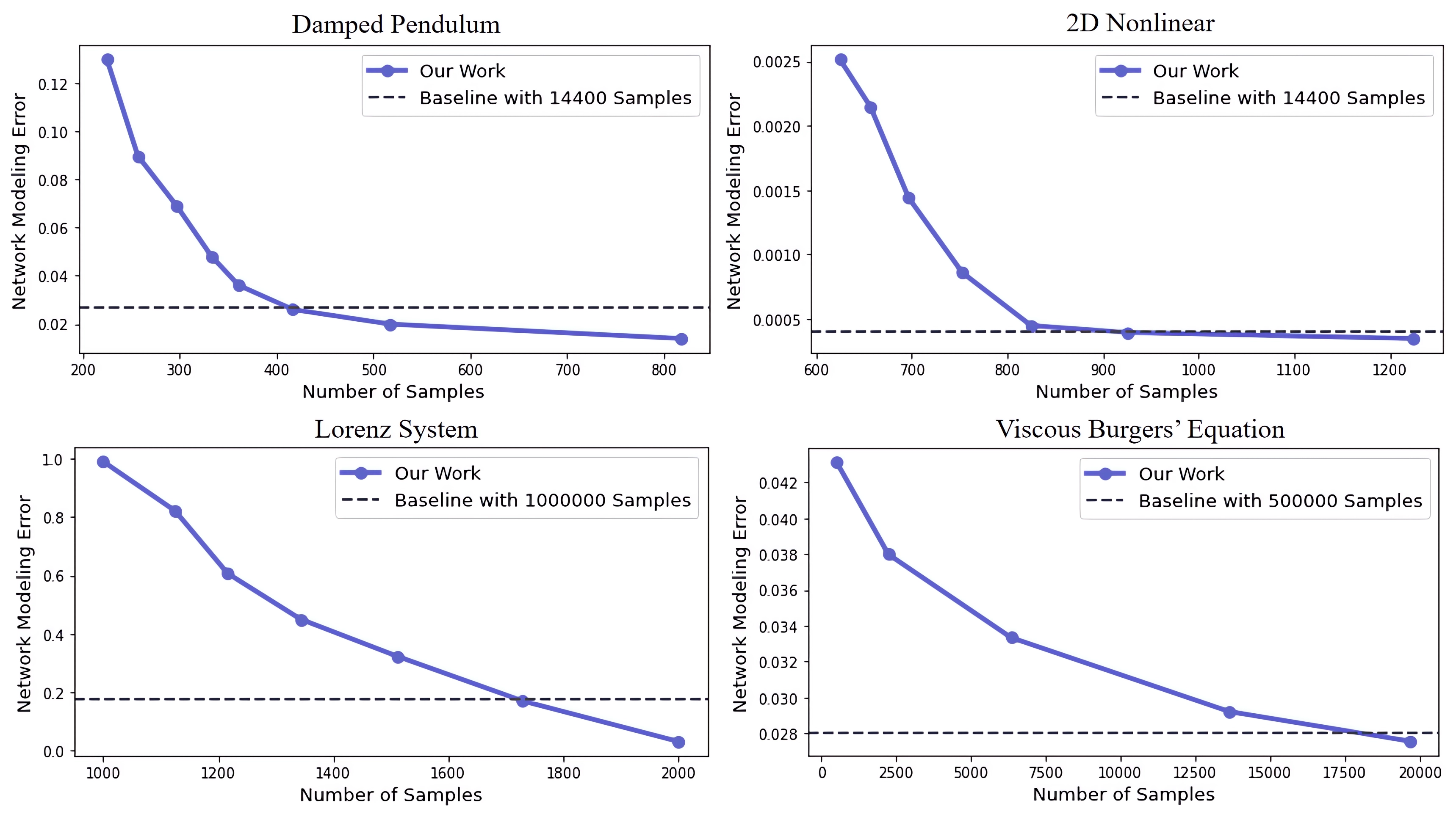}
    \vspace{-0.15in}
  \caption{The critical sampling and adaptive learning results on four dynamical systems.}
    \label{fig:all-error-samples}
    \vspace{-12pt}
\end{figure*}

\begin{figure*}[htbp!]
\begin{center}
\centerline{\includegraphics[width=\linewidth]{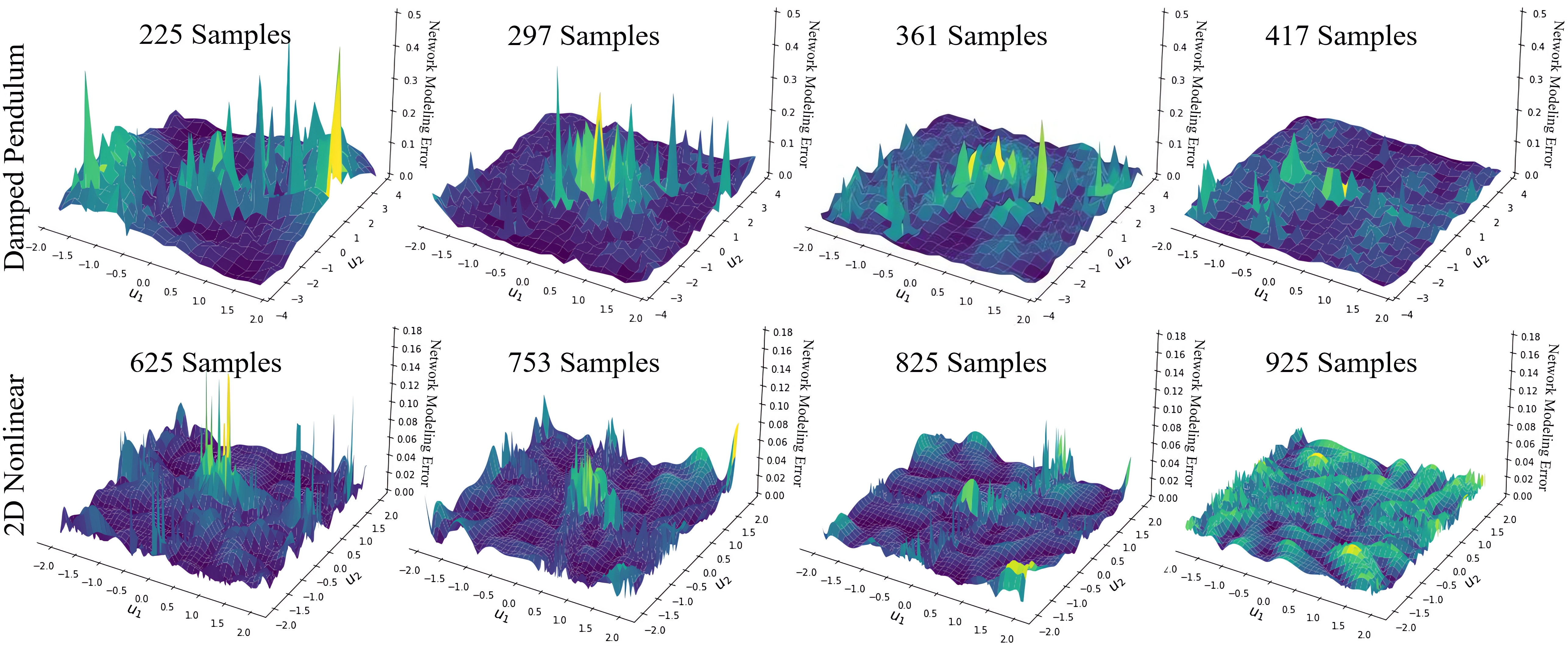}}
\vskip -0.1in
\caption{The reduction of network modeling error with more and more critical samples are collected for the Damped Pendulum system (top) and the 2D Nonlinear system (bottom).}
\label{fig:ODE-error}
\end{center}
\vspace{-0.15in}
\end{figure*}

\begin{figure*}[htbp!]
\begin{center}
\centerline{\includegraphics[width=\linewidth]{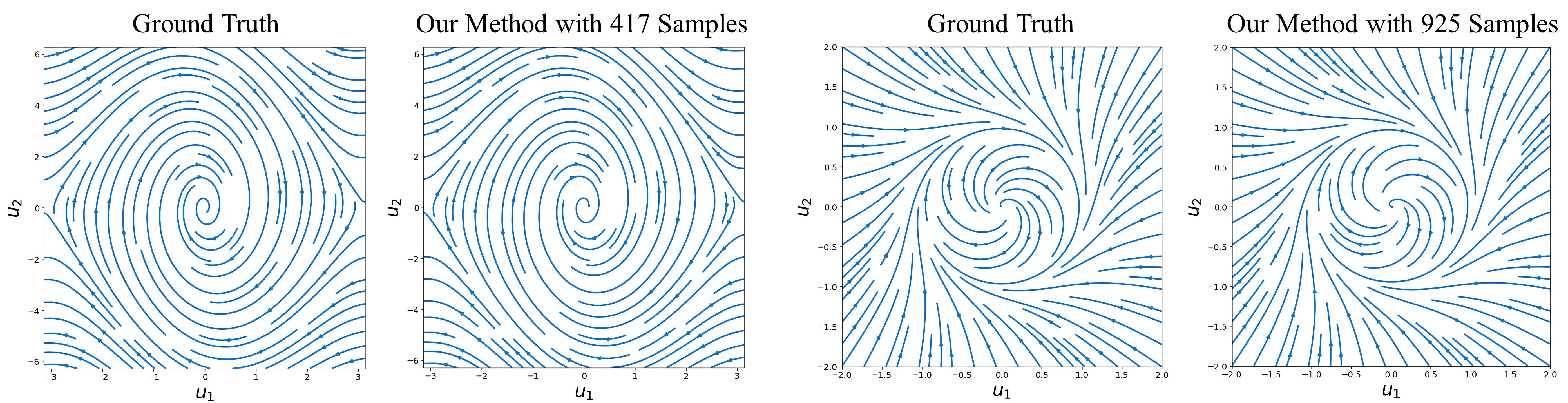}}
\vskip -0.1in
\caption{Phase portraits of the solutions obtained by critical sampling and adaptive evolution learning method for the Damped Pendulum system (left) and 2D Nonlinear system (right).}
\label{fig:solution}
\end{center}
\end{figure*}

\begin{figure*}[htbp!]
\begin{center}
\centerline{\includegraphics[width=0.8\linewidth]{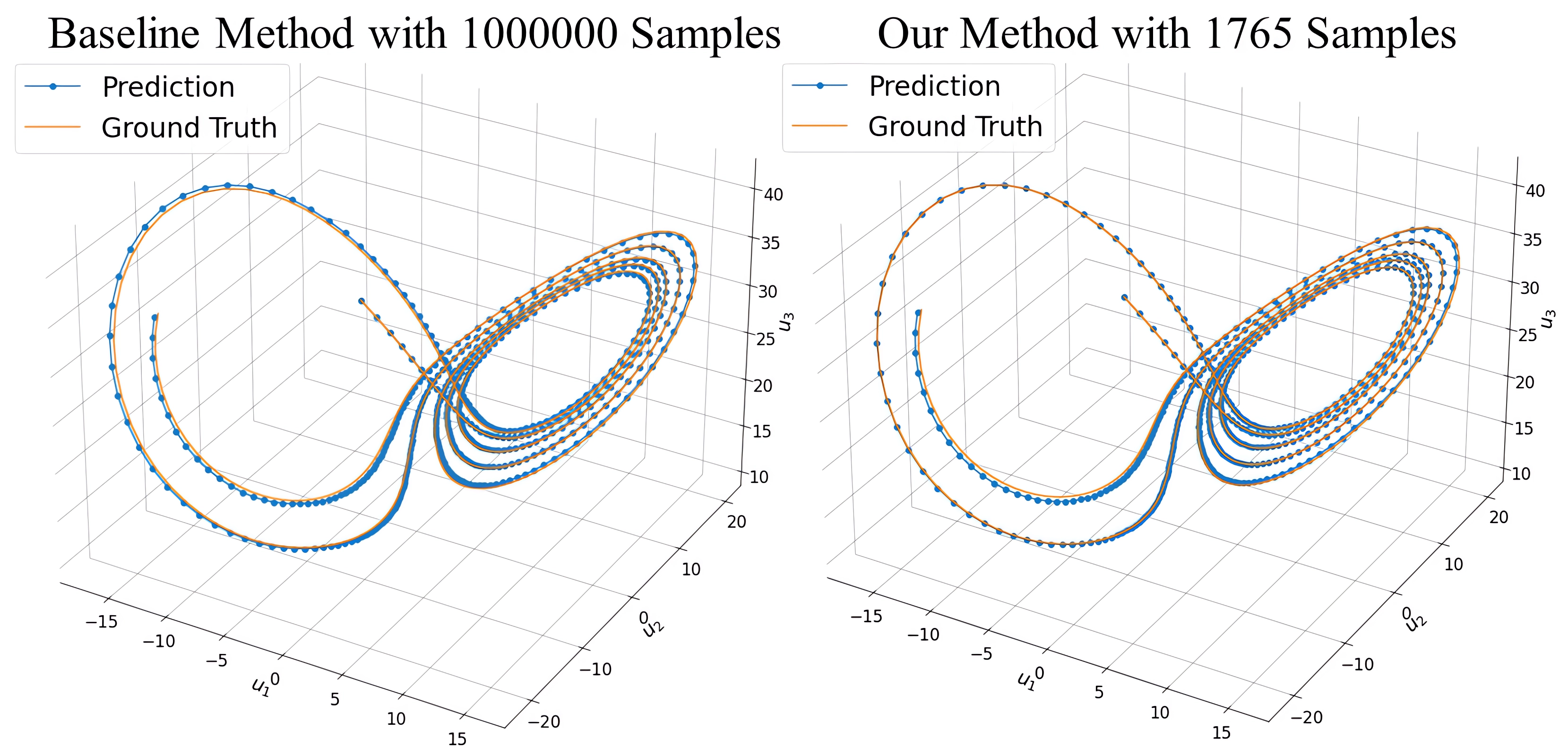}}
\vskip -0.1in
\caption{Solutions obtained by critical sampling and adaptive evolution learning method for the Lorenz system.}
\label{fig:ode3solution}
\end{center}
\vspace{-10pt}
\end{figure*}

\begin{figure*}[htbp!]
\begin{center}
\centerline{\includegraphics[width=1.0\linewidth]{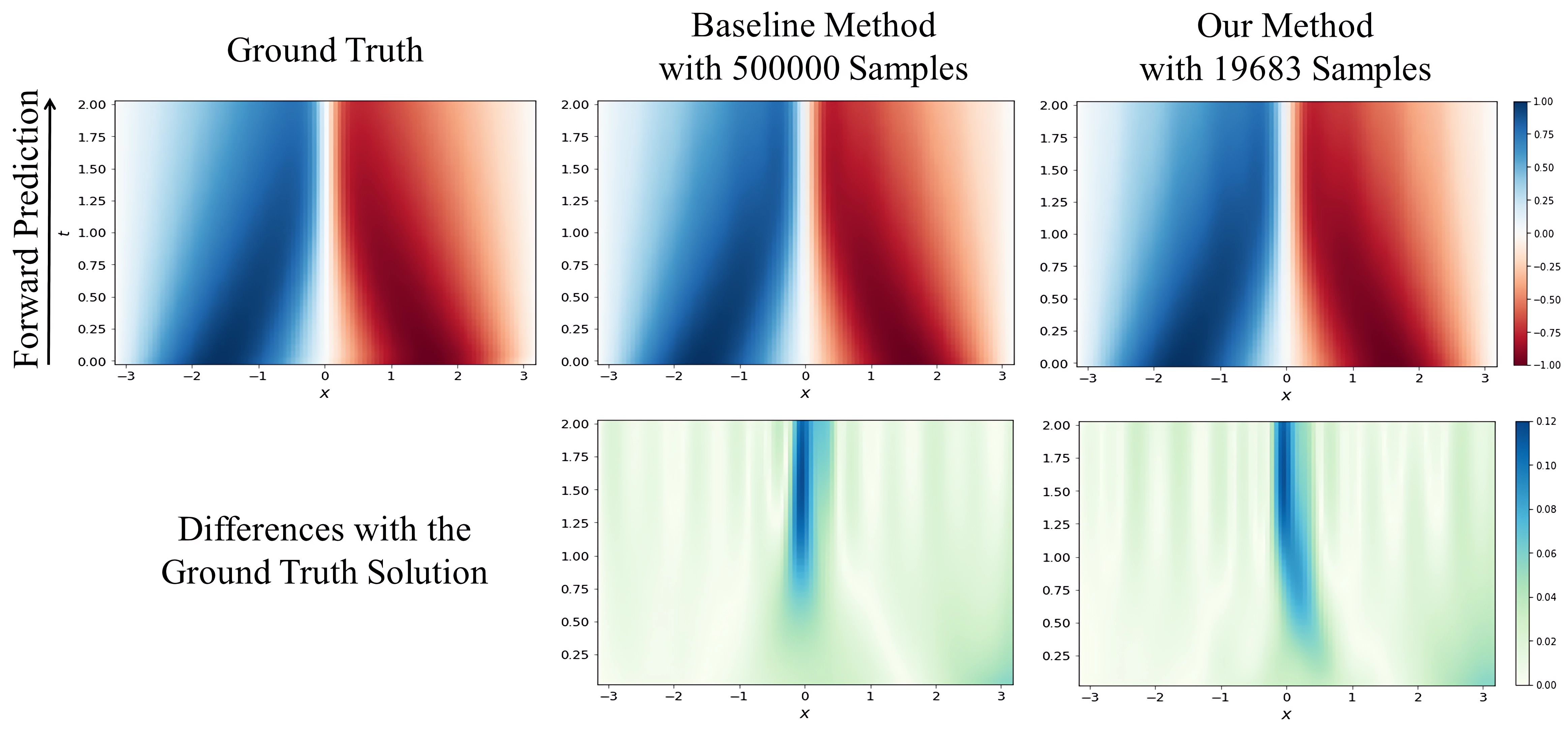}}
\vskip -0.1in
\caption{Solutions obtained by critical sampling and adaptive evolution learning method for the Viscous Burgers' equation.}
\label{fig:pdesolution}
\end{center}
\end{figure*}

Table \ref{tab:samples} compares the numbers of samples needed for learning the system evolution by the baseline method and our critical sampling and adaptive learning method. The prediction errors are evaluated on 50 arbitrarily chosen solution trajectories in the computational domain. All testing states are not included in the training set. Average errors and standard deviations are reported for each dynamical system.
For example, for the Lorenz system, it needs 1,000,000 samples to achieve the modeling error of 0.197. Using our proposed critical sampling method, the number of samples can be reduced to  1,765, while achieving an even smaller modeling error 0.194. The number of samples has been reduced by 567 times. 
For the Viscous Burgers' PDE system, the number of samples is also reduced by 25 times. 

\renewcommand{\arraystretch}{1.1}
\begin{table*}[htbp]
 \caption{Training time and inference time of our method and baseline method in all the experiments.}
\vspace{-8pt}
\label{tab-time}
\begin{sc}
\begin{small}
\begin{center}
\resizebox{0.9\linewidth}{!}{
\begin{tabular}{llccc}
\toprule
System & Method & Samples & Training Time (s)& Inference Time (s)\\ \midrule
\multirow{2}{*}{Damped Pendulum} & Ours & 417 &    2554.8     &  0.314  \\
                                 & Baseline & 14400 &  749.7    &  0.296 \\ \midrule
\multirow{2}{*}{2D Nonlinear} & Ours & 925 &   1868.4   & 0.121  \\
                                 & Baseline & 14400 &  444.1   & 0.129 \\ \midrule  
\multirow{2}{*}{Lorenz System} & Ours & 1765 &   81473.0          &   0.403  \\
                                 & Baseline & 1000000 &  14936.1    &  0.384 \\ \midrule   
\multirow{2}{*}{Viscous Burgers' Eq.} & Ours & 19683 &   68263.6      &  0.483  \\
                                 & Baseline & 500000 &  12137.5  &  0.488 \\  
\bottomrule
\end{tabular}
}
\end{center}
\end{small}
\end{sc}
\vspace{-10pt}
\end{table*}

Figure \ref{fig:all-error-samples} shows the performance comparison results for the four dynamical systems listed in Table \ref{tab-eqs}. 
In each sub-figure, the horizontal dashed line shows the average network modeling error achieved by the baseline method for the number of samples shown in the legend. This number is empirically chosen since it is needed for the network to achieve a reasonably accurate and robust learning performance.  We can see that as more and more samples are selected by our critical sampling method, the network modeling error quickly drops below the average modeling error of the baseline method. 

Figure \ref{fig:ODE-error} shows that the network modeling errors ${\mathcal E}[\mathbf{u}(0)]$ of the Damped Pendulum system (top) and the 2D Nonlinear system (bottom), are being quickly reduced with more and more critical samples are collected. 
Figure \ref{fig:solution} shows the phase portraits of the solutions for two systems, the Damped Pendulum system (left) and the 2D Nonlinear system (right), obtained by our method with comparison against the ground-truth solutions. 
We can see that, using a few hundred samples, our method is able to accurately learn the system evolution patterns.
Figure \ref{fig:ode3solution} shows an example solution trajectory for the Lorenz system. The result shows that our method can perform robust prediction on chaotic systems with only thousands of samples.
Figure \ref{fig:pdesolution} shows an example solution for the Viscous Burgers' PDE system. The first one on the top left is the ground-truth solution. The figures in the second column are the solution and its difference from the ground-truth solution for the baseline method. The figures in the final column are the solution and difference obtained by our method. We can see that using fewer samples, our method is able to learn the system evolution and predict its future states at the same level of accuracy.

In Table \ref{tab-time}, we present the training time and inference time of our method and baseline method respectively in all 4 numerical experiments. 
Our method requires more training time compared to the baseline, as the training time directly depends on the number of iterations needed to reach the threshold for the network modeling error. However, our method does not add complexity during the inference stage.
Although the proposed method has higher training complexity, our method can dramatically reduce the number of needed samples, thereby reducing the sample collection cost. This is particularly crucial in real-world dynamical systems where collecting data can be expensive and time-consuming.

\subsection{More Experimental Results}
To systematically evaluate our proposed method, we provide an empirical analysis on the effects of the hyper-parameters in this
section. We also compare our method with the state-of-the-art Markov Neural Operator \cite{li2022learning} to further demonstrate the effectiveness of our proposed critical sampling method.

\textbf{(1) Experiments on different numbers of reciprocal steps $K$.} In our experiments on all four dynamical systems, we set the hyper-parameters $K$ to 5. To comprehensively
investigate the effects of different values of $K$, we also conduct a sensitivity experiment where we set $K$ to 1, 3, 5, 7, 9 and evaluate the performance
on the Damped Pendulum system. The ablation results are presented in Table \ref{tab:Kstep}. The prediction error varies from 0.02411 to 0.09562 when using 925 samples and $K = 5$ yields the optimal performance.

\begin{table}[ht]
\begin{center}
\begin{sc}
\caption{Sensitivity analysis of hyper-parameter $K$ on the Damped Pendulum system.}
\label{tab:Kstep}
\begin{tabular}{cccccc}
\toprule
Samples & $K=1$     & $K=3$     & $K=5$     & $K=7$     & $K=9$     \\ \midrule
225     & 0.29783 & 0.17622 & \textbf{0.12706} & 0.23542 & 0.42140 \\
297     & 0.18295 & 0.10267 & \textbf{0.07370} & 0.15671 & 0.23097 \\
333     & 0.11774 & 0.07934 & \textbf{0.04345} & 0.10925 & 0.15458 \\
417     & 0.06281 & 0.04166 & \textbf{0.02411} & 0.07320 & 0.09562 \\ \bottomrule
\end{tabular}
\vspace{-10pt}
\end{sc}
\end{center}
\end{table}

\textbf{(2) Experiments on different sampling frequencies.}
We conduct an additional experiment to demonstrate the performance of our method for different sampling frequencies and target errors. In this experiment, we select 3,600, 6,400, 10,000, and 14,400 uniformly distributed samples for the baseline methods to learn the evolution operators of the Damped Pendulum and the 2D Nonlinear systems. The performance comparisons are shown in Table \ref{tab:samples2}. As we can see from the table, our method is able to significantly reduce the number of needed training samples for the deep neural networks when modeling the unknown dynamical systems, no matter how many samples are used for the baseline method.

\renewcommand{\arraystretch}{1.1}
\begin{table}[ht]
\caption{Samples and errors for learning the system evolution using the baseline method and our method on Damped Pendulum and 2D Nonlinear ODE systems.}
\vspace{-10pt}
\label{tab:samples2}
\vspace{-0.1in}
\begin{center}
\begin{small}
\resizebox{\linewidth}{!}{
\begin{tabular}{l|cc|cc|c}
\hline
\multirow{2}{*}{Dynamical System} & \multicolumn{2}{c|}{Baseline} & \multicolumn{2}{c|}{Our Work} & \multirow{2}{*}{Ratio}  \\ \cline{2-5}
 & Samples & Error & Samples & Error & \\ \hline

\multirow{4}{*}{Damped Pendulum}  & {3600}          & {0.12803}     & \textbf{225}      & \textbf{0.12706} & {16.00} \\
                                  & {6400}          & {0.07459}     & \textbf{297}      & \textbf{0.07370} & {21.55} \\
                                  & {10000}         & {0.04370}     & \textbf{333}      & \textbf{0.04345} & {30.03} \\
                                  & {14400}         & {0.02630}     & \textbf{417}      & \textbf{0.02411} & {34.53} \\ \hline
\multirow{4}{*}{2D Nonlinear}     & {3600}          & {0.00695}     & \textbf{496}      & \textbf{0.00673} & {7.26}  \\
                                  & {6400}          & {0.00254}     & \textbf{625}      & \textbf{0.00250} & {10.24} \\
                                  & {10000}         & {0.00057}     & \textbf{825}      & \textbf{0.00045} & {12.12} \\
                                  & {14400}         & {0.00037}     & \textbf{925}      & \textbf{0.00035} & {15.57} \\\hline

\end{tabular}
}
\vspace{-0.1in}
\end{small}
\end{center}
\end{table}

\textbf{(3) Comparison with Markov Neural Operator.} 
To demonstrate the effectiveness of the critical sampling approach, we conduct a comparison with Markov Neural Operator (MNO) \cite{li2022learning}, a state-of-the-art method for learning chaotic systems, including the three-dimensional Lorenz system. To ensure a fair comparison, we follow the experimental settings of MNO \cite{li2022learning}, training a 6-layer feedforward neural network with 200,000 data pairs from a single trajectory with a time step of $\Delta=0.05$ seconds. We evaluate the learned evolution operators based on the relative (normalized) $l^2$-error over 1 second and report both the per-step error and per-second error. The per-step error is the error on the timescale $\Delta=0.05$ seconds used in training, while the per-second error is the error of the model composed with itself 20 times. We then use our method to adaptively discover critical samples to reach the same error. The experimental results are summarized in Table \ref{tab:mno}. Here, the results of MNO are directly taken from the original paper \cite{li2022learning} of MNO. In this table, MNO (w/ diss.) indicates that dissipativity is enforced during the training process of MNO. See \cite{li2022learning} for more details.
With 45 times fewer training samples than MNO, our method can still achieve an even smaller per-step and per-second prediction errors.

\begin{table}[ht]
\begin{center}
\begin{sc}
\caption{Performance comparison of our method and Markov Neural Operator on the Lorenz system.}
\label{tab:mno}
\resizebox{\linewidth}{!}{
\begin{tabular}{lccc}
\toprule
Methods & Samples     & Per-step     & Per-second \\ \midrule
MNO (w/o diss.)    & 200000 &0.000570 & 0.0300 \\
MNO (w/ diss.)    & 200000 & 0.000564 & 0.0264 \\
Ours    & \textbf{4452} & \textbf{0.000559} & \textbf{0.0261} \\ \bottomrule
\end{tabular}
}
\vspace{-10pt}
\end{sc}
\end{center}
\end{table}

\section{Conclusion and Further Discussion}
\label{sec-conclusion}
In this work, we have studied the critical sampling for the adaptive evolution operator learning problem.
We have made an interesting finding that the network modeling error is correlated with the multi-step reciprocal prediction error.  
With this, we are able to perform a dynamic selection of critical samples from regions with high network modeling errors and develop an adaptive sampling-learning method for dynamical systems based on the spatial-temporal evolution network.
Extensive experimental results demonstrate that our method is able to dramatically reduce the number of samples needed for effective learning and accurate prediction of the evolution behaviors.

In the future, we hope to apply our approach to large-scale dynamical systems, by 
combining some reduced-order modeling or lifting techniques (cf.~\cite{qian2020lift}) or incorporating certain sparsity (cf.~\cite{schaeffer2018extracting}). 
Another important question that has not been fully addressed in this paper is how to control the system state towards those samples selected by our critical sampling method. During simulations, this system state control is often available. However, for some complex systems, the exact change of the system state may not be trivial. In this case, we shall investigate how the system state control impacts the critical sampling and system modeling performance.

\bibliographystyle{IEEEtran}
\bibliography{reference_abbr}
\begin{IEEEbiography}[{\includegraphics[width=1in,height=1.25in,clip,keepaspectratio]{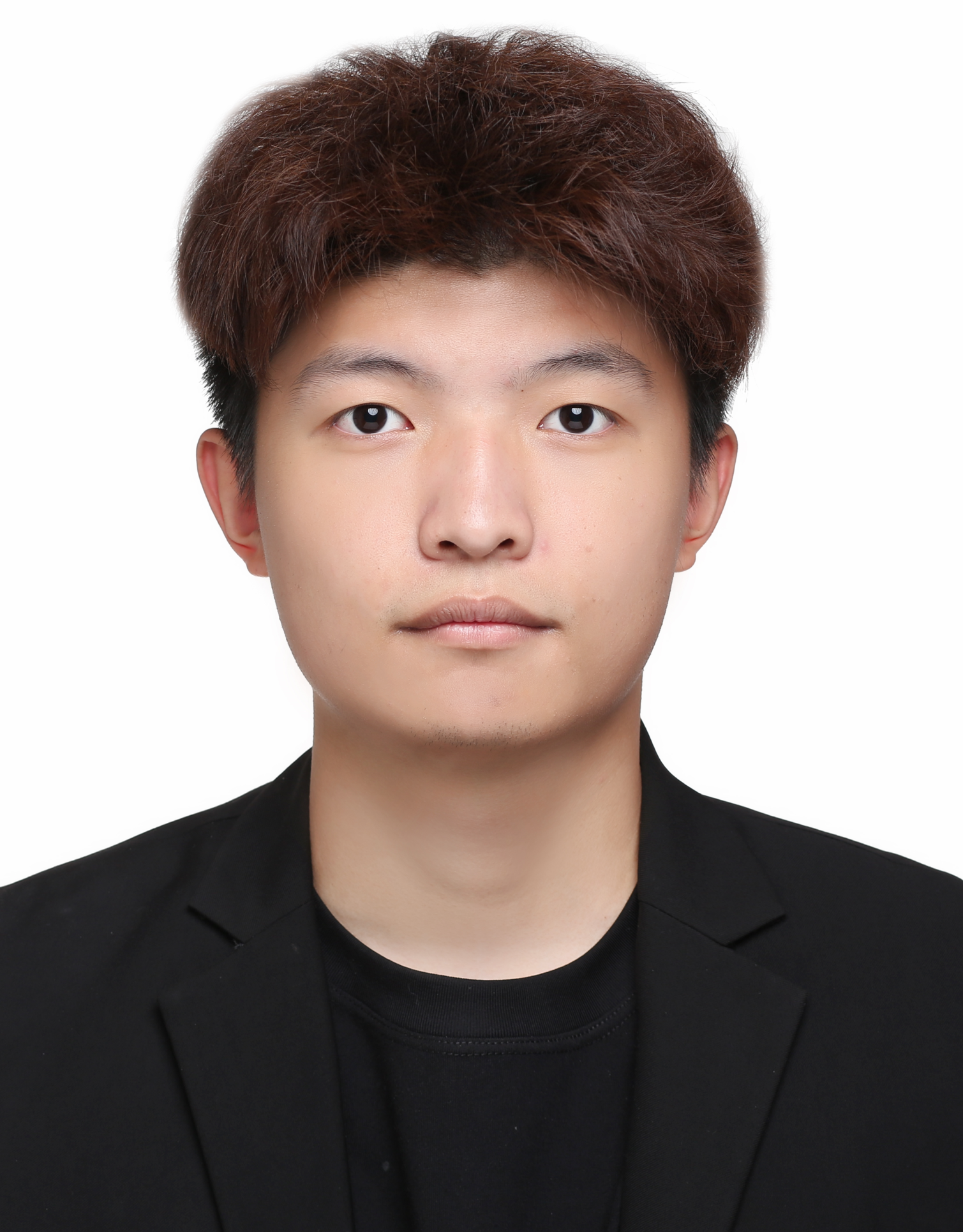}}]{Ce Zhang}{\space}received the B.Eng. degree in communication engineering from the Department of Electronic and Electrical Engineering, Southern University of Science and Technology, Shenzhen, Guangdong, China, in 2023. He is currently pursuing the M.S. degree in machine learning at the Machine Learning Department, Carnegie Mellon University, Pittsburgh, PA, USA. His research interests include machine learning and computer vision.
\end{IEEEbiography}

\enlargethispage{-15\baselineskip}

\begin{IEEEbiography}[{\includegraphics[width=1in,height=1.25in,clip,keepaspectratio]{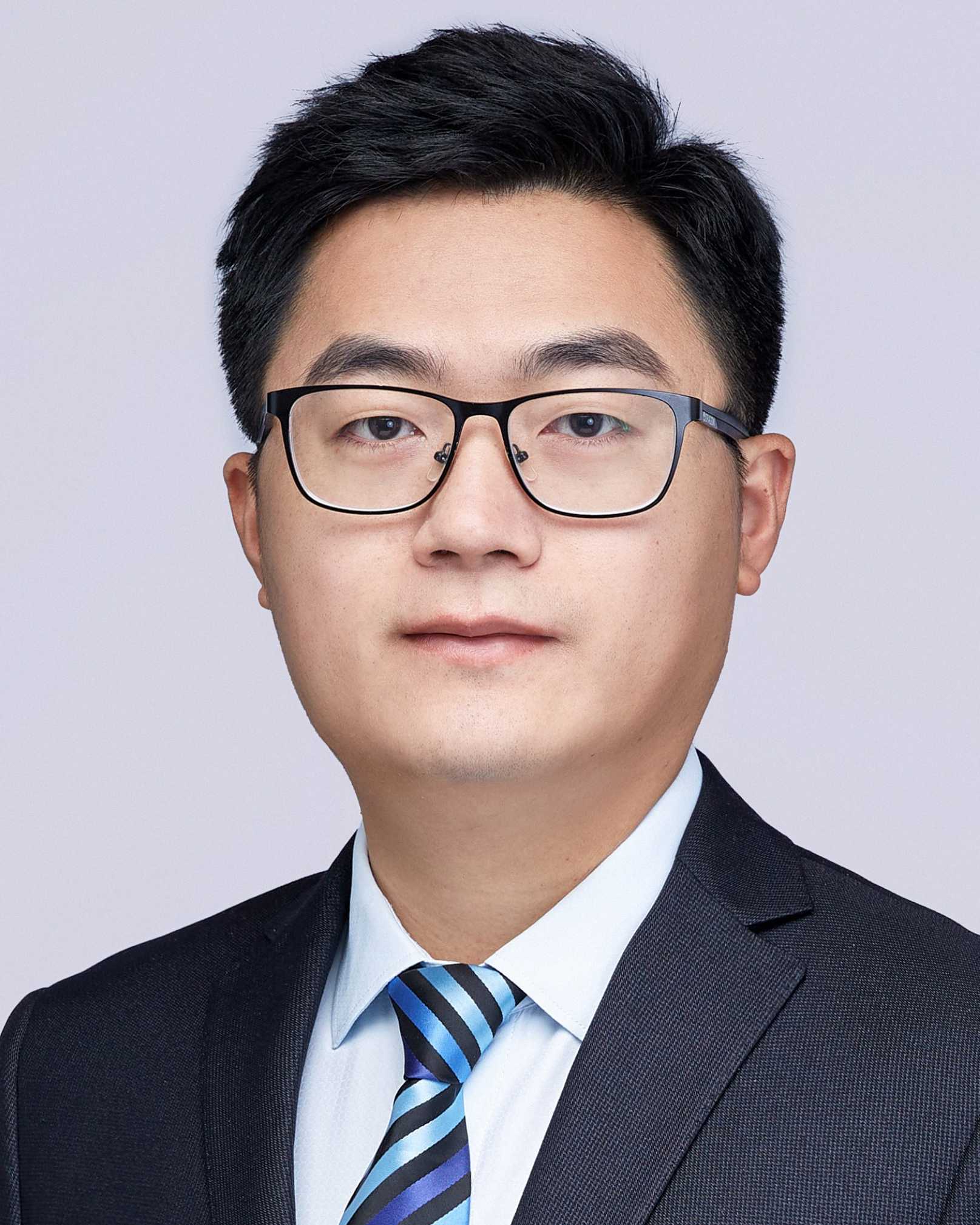}}]{Kailiang Wu}{\space}received his B.S. from Huazhong University of Science and Technology in 2011 and Ph.D. from Peking University in 2016. From 2016 to 2020, he was a postdoctoral scholar at the University of Utah and the Ohio State University. He currently serves as an associate professor at the Department of Mathematics and International Center for Mathematics at the Southern University of Science and Technology and is affiliated with the National Center for Applied Mathematics Shenzhen. His research focuses on machine learning, data-driven modeling, numerical methods for partial differential equations, and computational fluid dynamics, etc.
\end{IEEEbiography}

\begin{IEEEbiography}[{\includegraphics[width=1in,height=1.25in,clip,keepaspectratio]{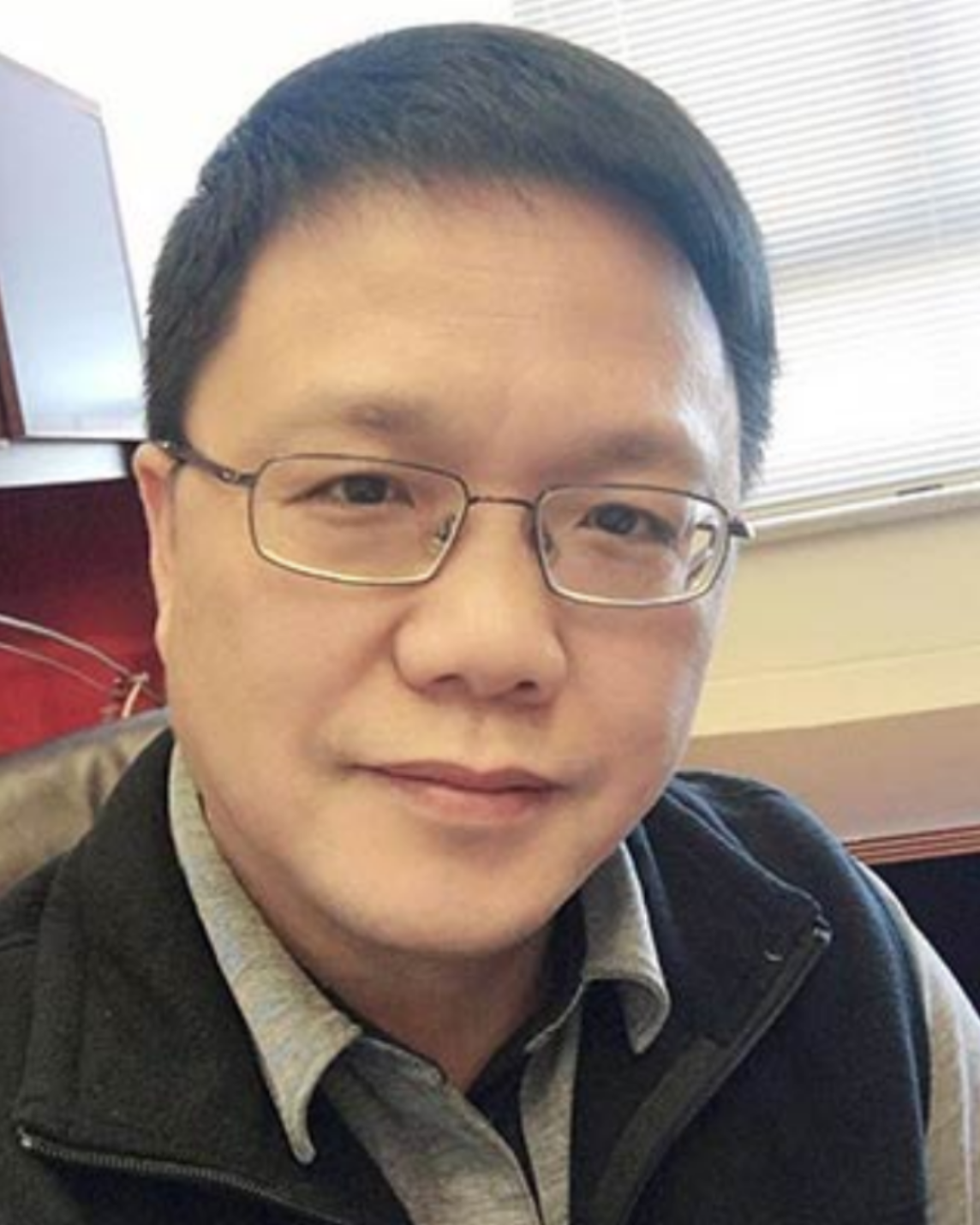}}]{Zhihai He} (IEEE Fellow 2015) received the B.S. degree in mathematics from Beijing Normal University, Beijing, China, in 1994, the M.S. degree in mathematics from the Institute of Computational Mathematics, Chinese Academy of Sciences, Beijing, China, in 1997, and the Ph.D. degree in electrical engineering from the University of California, at Santa Barbara, CA, USA, in 2001. In 2001, he joined Sarnoff Corporation, Princeton, NJ, USA, as a member of technical staff. In 2003, he joined the Department of Electrical and Computer Engineering, University of Missouri, Columbia, MO, USA, where he was a tenured full professor. He is currently a chair professor with the Department of Electrical and Electronic Engineering, Southern University of Science and Technology, Shenzhen, P. R. China. His current research interests include image/video processing and compression, wireless sensor network, computer vision, and cyber-physical systems.

He is a member of the Visual Signal Processing and Communication Technical Committee of the IEEE Circuits and Systems Society. He serves as a technical program committee member or a session chair of a number of international conferences. He was a recipient of the 2002 {\sc IEEE Transactions on Circuits and Systems for Video Technology} Best Paper Award and the SPIE VCIP Young Investigator Award in 2004. He was the co-chair of the 2007 International Symposium on Multimedia Over Wireless in Hawaii. He has served as an Associate Editor for the {\sc IEEE Transactions on Circuits and Systems for Video Technology} (TCSVT), the {\sc IEEE Transactions on Multimedia} (TMM), and the Journal of Visual Communication and Image Representation. He was also the Guest Editor for the IEEE TCSVT Special Issue on Video Surveillance.
\end{IEEEbiography}

\end{document}